\title{Localized active learning of Gaussian process state space models}
\newtheorem{assumption}{Assumption}
\newtheorem*{*proofofthm1}{Proof of Theorem 1}
\Crefname{algorithm}{Algorithm}{Algorithms}
\newcommand\xaug{{\tilde{\bm x}}}
\newcommand{\x}{{\bm x}}
\newcommand{\StatSpAug}{{\tilde{\StatSp}}}
\newcommand{\y}{\bm y}
\newcommand{\ysc}{y}
\newcommand{\xsc}{x}
\newcommand{\gsc}{g}
\newcommand{\X}{{\mathcal{X}}}
\newcommand{\StatSpAugRef}{{\StatSpAug_{\text{ref}}}} %Set of desired data points
\newcommand{\xref}{\bm{\xi}}
\newcommand{\Xref}{{\Xaug_{\text{ref}}}}
\newcommand{\K}{{\bm{K}}}
\newcommand{\f}{{\bm f}}
\newcommand{\fsc}{f}
\newcommand{\g}{{\bm{g}}}
\newcommand{\gpred}{\widehat{\bm{g}}}
\newcommand{\w}{{\bm{w}}} %Noise vector
\newcommand{\wsc}{{w}} %Scalar entry of noise vector
\newcommand{\transp}{^{\text{T}}}
\newcommand{\Id}{\bm I}
\newcommand{\NMPC}{N_H}
\newcommand{\dimx}{{d_x}} %Dimension of state space
\newcommand{\dimu}{{d_u}}
\newcommand{\uin}{{\bm u}}
\newcommand{\xaugsc}{\tilde{x}}
\newcommand{\StatSp}{\mathcal{X}} %State space
\newcommand{\kernel}{k}
\newcommand{\mean}{m}
\newcommand{\Xaug}{{\tilde{\bm{X}}}}
\newcommand{\postmeansc}{\mu}
\newcommand{\step}{t}
\newcommand{\MPCstep}{\tau} %Step in MPC optimization
\newcommand{\sigon}{\sigma_{w}}
\newcommand{\Sigmaon}{\bm{\Sigma}_{w}}
\newcommand{\InputSp}{{\mathcal{U}}}
\newcommand{\rationals}{\mathbb{R}}
\newcommand{\rationalspz}{\rationals_{+,0}}
\newcommand{\naturalsz}{\mathbb{N}_{0}}
\DeclarePairedDelimiterX\PBasics[1](){ #1}
\DeclarePairedDelimiterX\EBasics[1][]{ #1}
\DeclarePairedDelimiter\absv{\lvert}{\rvert}
\DeclareMathOperator*{\argmax}{arg\,max}
\author{%
 \Name{Alexandre Capone} \Email{alexandre.capone@tum.de}\\
 %\AND
 \Name{Gerrit Noske} \Email{ga24coq@mytum.de}\\
 \Name{Jonas Umlauft} \Email{jonas.umlauft@tum.de}\\
  \Name{Thomas Beckers} \Email{t.beckers@tum.de}\\
   \Name{Armin Lederer} \Email{armin.lederer@tum.de}\\
 \Name{Sandra Hirche} \Email{hirche@tum.de}\\
 \addr Chair of Information-oriented Control \\
 Department of Electrical and Computer Engineering\\
 Technical University of Munich \\
 D-80333 Munich, Germany
}
\begin{document}
	
	\setlength{\abovedisplayskip}{8pt}
	\setlength{\belowdisplayskip}{8pt}
	\setlength{\abovedisplayshortskip}{8pt}
	\setlength{\belowdisplayshortskip}{8pt}
	\setlength{\textfloatsep}{8pt}

\maketitle

\begin{abstract}%
%The performance of learning-based control techniques crucially depends on how 
%effectively the system is explored. 
While most dynamic system exploration techniques aim to 
achieve a globally accurate model, this is generally unsuited for 
systems with unbounded state spaces. Furthermore, many applications do not require a globally accurate model, 
e.g., local stabilization tasks. In this paper, we propose an active learning 
strategy for Gaussian process state space models that aims to obtain an 
accurate model on a bounded subset of the state-action space. Our approach aims 
to maximize the mutual information of the exploration trajectories with respect 
to a discretization of the region of interest. By employing model predictive 
control, the proposed technique integrates information collected during 
exploration and adaptively improves its exploration strategy. To enable 
computational tractability, we decouple the choice of most informative data 
points from the model predictive control optimization step. This yields two 
optimization problems that can be solved in parallel. We apply the proposed 
method to explore the state space of various dynamical systems and compare our 
approach to a commonly used entropy-based exploration strategy. In all 
experiments, our method yields a better model within the region of interest 
than the entropy-based method.
%We provide techniques for efficiently computing computationally expensive 
%matrix inversions, and discuss the choice of different control parameters.

\end{abstract}

\begin{keywords}%
  exploration, Bayesian inference, data-driven control, model predictive control%
\end{keywords}
\section{Introduction}
\label{sect:Introduction}

Autonomous systems often need to operate in complex environments, of which a model is difficult or even impossible to derive from first principles. 
Learning-based techniques have become a promising paradigm to address these issues \citep{pillonetto2014kernel}. %, but obtaining a high quality data set to train them is not a trivial task. 
In particular, Gaussian processes (GPs) have been increasingly employed for 
system identification and control \citep{Umlauft2018, Capone2019BacksteppingFP, 
berkenkamp2015safe,deisenroth2015gaussian}. GPs possess very good generalization 
properties \citep{Rasmussen2006}, which can be leveraged to obtain 
data-efficient learning-based approaches \citep{deisenroth2015gaussian, 
kamthe2017data}. By employing a Bayesian framework, GPs provide an automatic 
trade-off between model smoothness and data fitness. Moreover, GPs provide an 
explicit estimate of the model uncertainty that is used to derive probabilistic 
bounds in control settings \citep{Capone2019BacksteppingFP,beckers2019automatica,umlauft2020feedback}.

A performance-determining factor of data-driven techniques is the 
quality of the available data. In settings where data is insufficient to 
achieve accurate predictions, new data needs to be gathered via exploration 
\citep{umlauft2020feedback}. %This represents a considerable challenge: in 
%order to guide the system exploration efficiently, a precise prediction of the 
%future trajectory is required. However, model uncertainty implies uncertain 
%predictions, which in turn hinders an efficient exploration. 
In classical reinforcement learning settings, exploration is often enforced by randomly selecting a control action with a predetermined probability that tends to zero over time \citep{dayan1996exploration}. However, this is generally inefficient, as regions of low uncertainty are potentially revisited in multiple iterations. %However, physical systems often wear down with multiple system interactions, e.g., due to friction. Other are expensive to explore, e.g., due to power consumption. This potentially limits the number of possible interactions with the system during exploration. 
These issues have been addressed by techniques that choose the most informative exploration trajectories \citep{alpcan2015information,ay2008predictive,burgard2005probabilistic,schreiter2015safe}. The goal of these methods is to obtain a globally accurate model. While this is reasonable for systems with a bounded state-action space, it is unsuited for systems with unbounded ones, particularly if a non-parametric model is used. This is because a potentially infinite number of points is required to achieve a globally accurate model. Furthermore, in practice a model often only needs to be accurate locally, e.g., for stabilization tasks.

In this paper, we propose a model predictive control-based exploration approach that steers the system towards the most informative points within a bounded subset of the state-action space. By modeling the system with a Gaussian process, we are able to quantify the information inherent in each data point. Our approach chooses actions by approximating the mutual information of the system trajectory with respect to a discretization of the region of interest. This is achieved by first selecting the single most informative data point within the region of interest, then steering the system towards that point using model predictive control. Through this approximation, the solution approach is rendered computationally tractable.

The remainder of this paper is structured as follows. \Cref{sect:ProblemSetting} describes the general problem. \Cref{sect:GaussianProcesses} discusses how GPs are employed for modeling and exploration. \Cref{sect:ExploratoryMPC} presents the LocAL algorithm, and is followed by a numerical simulation example, in~\Cref{sect:Examples}.

\section{Problem Statement}
\label{sect:ProblemSetting}
We consider the problem of exploring the state and control space of a 
discrete-time nonlinear system with Markovian dynamics of the 
form
%\footnote{Lower/upper case bold symbols denote vectors/matrices, 
%	$\rationalspz$/$\rationalsp$ all real positive numbers with/without zero, 
%	and
%	$\naturalsz$/$\naturals$ all natural numbers with/without zero, 
%	%$\lceil \cdot \rceil$/$\lfloor \cdot \rfloor$ rounding to the closest 
%	%larger/smaller integer,
%	%$\eigmin(\cdot),\eigmax(\cdot)$ the minimal/maximal eigenvalue of a 
%	%matrix, 
%	%	$\smin(\cdot),\smax(\cdot)$ the minimal/maximal singular value of a matrix 
%	
%	respectively.
%	$\Id$ denotes the identity matrix of suitable size,
%	$\mathbb{E}(\cdot)$ the expected value/variance of a random 
%	variable, 
%	$\mathcal{N}(\mu,\sigma)$ a Gaussian distribution with mean~$\mu$ and 
%	variance~$\sigma$,
%	$\left[\bm{a}\right]_i$ denotes the~$i$-the element of the vector~$\bm{a}$,
%	%	$\vec{a}_{1:n}$ the first~$n$ elements of the vector~$\vec{a}$,
%	%	$\cdot \succ 0$ the positive definiteness of matrix or function
%	and 
%	%	$\norm{\cdot}$ the Euclidean norm if not stated otherwise. 
%	$\absv{\cdot}$ the cardinality of a set.
%}
\begin{align}
	\label{eq:SystemDynamics}	
	{\x_{\step+1}} &= \f (\x_{\step},\uin_{\step}) + \g(\x_{\step},\uin_{\step}) + \w_{\step} := \f (\xaug_{\step}) + \g(\xaug_{\step}) + \w_{\step},
\end{align}
where~$\step\in\naturalsz$,~${\x_{\step} \in \StatSp \subseteq 
	\rationals^{\dimx}}$ and 
${\uin_{\step} \in \mathcal{U} \subseteq \rationals^{\dimu}}$ are the 
system's state vector and control vector at the~$\step$-th time step, 
respectively. The system is disturbed by multivariate Gaussian process noise~${\w_{\step} \sim 
\mathcal{N}(\bm{0}, \Sigmaon)}$ with~$\Sigmaon = \text{diag}(\sigma_{\text{w},1}^2, 
\ldots, 
\sigma_{\text{w},\dimx}^2)$,~$\sigma_{\text{w,i}}^2\in\rationalspz$. The 
concatenation~${\xaug_{\step}:= 
	\begin{pmatrix}
	\x_{\step}\transp& \uin_{\step}\transp\end{pmatrix}\transp \in 
	\StatSpAug}$, where~${\StatSpAug:=\StatSp \times \mathcal{U}}$
%, corresponds to 
%the concatenation of the state~$\x_{\step}$ and control 
%inputs~$\uin_{\step}$, and
 is employed for simplicity of exposition. %The system's 
%initial condition is~${\x_{0}\in \StatSp}$. 
The nonlinear function~${\f : \StatSpAug 
	\rightarrow \StatSp}$ %continuously differentiable, an
represents the known component of the system dynamics, e.g., a model obtained using first 
principles, while~${\g : \StatSpAug \rightarrow \StatSp}$ 
corresponds to the unknown component of the system dynamics. %, and is also 
%assumed to be continuously differentiable. 

We aim to obtain an approximation of the function 
$\g(\cdot)$, denoted~$\gpred(\cdot)$, which provides an accurate estimate of~${\g(\cdot)}$ on a predefined bounded subset of the augmented state space $\StatSpAug_B \subset \StatSpAug$. This is often required in practice, e.g., for local stabilization tasks.
%Formally, this is expressed as
%\begin{align}
%	\hat{\bm{g}}(\cdot) = \lVert\hat{\bm{g}}- \bm{g} \rVert_{2}
%\end{align}

\section{Gaussian Processes}
\label{sect:GaussianProcesses}
In order to faithfully capture the stochastic behavior of 
\eqref{eq:SystemDynamics}, we model the system as a \textit{Gaussian process} 
(GP), where we employ measurements of the augmented state vector 
$\xaug_{\step}$ as training inputs, and the differences~$\x_{\step+1} - 
\f(\xaug_{\step})= \g(\xaug_{\step}) + \w_{\step}$ as training targets. 

A GP is a collection of dependent random variables variables, for which any 
finite subset is jointly normally distributed \citep{Rasmussen2006}. It is 
specified by a mean function~$\mean: \StatSpAug \rightarrow \mathbb{R}$ and a positive definite covariance 
function~$\kernel: \StatSpAug \times \StatSpAug \rightarrow \mathbb{R}$, also known as kernel. In this paper, we 
set~$\mean \equiv 0$ without loss of generality, as all prior knowledge is already encoded in $\fsc(\cdot)$. The kernel~$\kernel(\cdot,\cdot)$ is 
a similarity measure for evaluations of~$\g(\cdot)$, and encodes function 
properties such as smoothness and periodicity. %We choose a continuously differentiable kernel~$\kernel(\cdot,\cdot)$, namely the squared exponential kernel, to reflect the assumption that~$\g(\cdot)$ is continuously differentiable. 

In the case where the state is a scalar, i.e.,~$\dimx=1$, given~$n$ training 
input samples~\linebreak$\Xaug = \left\{\xaug_1, \ldots, \xaug_n \right\}\subset 
\StatSpAug$ and training 
outputs~$\y_{\Xaug} = \begin{pmatrix}\gsc(\xaug_1) + \wsc_1& 
\ldots&\gsc(\xaug_n) + \wsc_n \end{pmatrix} \transp$, the posterior mean and 
variance of the GP corresponds 
to a one-step transition model. Starting at a point~$\xaug_{\step}$, the 
difference between the subsequent state and the known component is normally 
distributed, i.e.,
\begin{align}
	\x_{\step+1} - \f(\xaug_{\step})\sim 
	\mathcal{N}\left(\postmeansc_n(\xaug_{\step}) ,\sigma_n^2(\xaug_{\step})  \right),
\end{align}
with mean and variance given by
\begin{align*}
	%\label{eq:def_GPmu}
	\postmeansc_n(\xaug_{\step}) \coloneqq %\postmeansc(\xaug_{\step} \vert \Xaug, \y_{\Xaug})=
	\bm{\kernel}^{\text{T}}(\xaug_{\step}) \left(\K + \sigon^2 \Id 
	\right)^{-1}\y_{\Xaug}, \qquad
	\sigma^2_n(\xaug_{\step}) 
	\coloneqq %\sigma^2(\xaug_{\step} \vert \Xaug, \y_{\Xaug} )  =
	\kernel(\xaug_{\step},\xaug_{\step}) - 
	\bm{\kernel}\transp(\xaug_\step)\left(\K + \sigma^2 \Id \right)^{-1} 
	\bm{\kernel}(\xaug_{\step}),
\end{align*}
respectively, where~${\bm{\kernel}(\cdot) = \begin{pmatrix}
	\kernel(\xaug_1, \cdot) &\ldots& k(\xaug_n, \cdot)\end{pmatrix}\transp}$, 
and the entries of the covariance matrix~$\bm{K}$ are computed as $K_{ij}=\kernel(\xaug_i,\xaug_j)$, $i,j=1,\ldots,n$. 

In the case where the state is multidimensional, we model dimension of the state transition function
using a separate GP. This corresponds to the assumption that the state transition function entries 
are conditionally independent. For simplicity of exposition, unless stated 
otherwise, we henceforth assume~$\dimx =1$. However, the methods presented in 
this paper extend straightforwardly to the multivariate case.
\subsection{Performing multi-step ahead predictions}
The GP model presented in the previous section serves as a one-step predictor 
given a known test input~$\xaug_\step$. However, if only a 
distribution~$p(\xaug_{\step})$ is available, the successor states' distribution generally cannot be computed 
analytically. Hence, the distributions of future states cannot be computed 
exactly, but only approximated, e.g., using Monte Carlo methods
\citep{candela2003multistep}. Alternatively, approximate computations exist that enable to propagate the GP uncertainty over multiple time steps, such as 
moment-matching and GP linearization \citep{deisenroth2015gaussian}. In this 
paper, we employ the GP mean to perform multi-step ahead predictions, without 
propagating uncertainty, i.e., $
	\xsc_{\step+1} = \fsc(\xaug_{\step}) + \mu_n(\xaug_{\step})$, $\step\in \mathbb{N}$.
%The reasons for this choice are as follows. Firstly, employing 
%\eqref{eq:MultiStep} to perform multi-step predictions is computationally less 
%expensive than methods that approximate the full probability distribution. 
%Moreover, it allows to easily derive analytical expressions for the derivatives 
%of the multi-step predictions, which can be more involved when employing other 
%approximations. As we show in \Cref{sect:ExploratoryMPC}, these are important 
%considerations for our algorithm. Furthermore, as we show in 
%\Cref{sect:QuantifyingUtilityofData}, we do not propagate the model uncertainty 
%for the cost function either. 
However, the proposed method is also applicable using models that propagate uncertainty, e.g., moment-matching \citep{deisenroth2015gaussian}.

%%%%%%%%%%%%%%%%%%%%%%%%%%%%%%%%%%%%%%%%%%%%%%%%%%%%%%%%%%%%%%%%%%%%%%%%%%%%%%%%
\subsection{Quantifying utility of data}
\label{sect:QuantifyingUtilityofData}

In order to steer the system along informative trajectories, we need to quantify the utility of data points in the augmented state space $\StatSpAug$. To this end, we consider the 
\text{mutual information} between observations~$\y_{\Xaug}$
at training inputs~$\Xaug$ and evaluations $\y_{\Xref}$ at reference points $\Xref$. Here~${\Xref\subset \StatSpAugRef}$ is a discretization of the bounded subset $\StatSpAugRef$.
Formally, the mutual information between~$\y_{\Xref}$ and~$\y_{\Xaug}$ is given by
\begin{align}
\label{eq:MutualInformation}
I(\y_{\Xaug}, \y_{\Xref}) %= H(\y_{\Xaug})   - H(\y_{\Xaug} \vert \y_{\Xref}),
%\end{align}
%where
%\begin{align}
%\label{eq:Entropies}
= \!\!\!\!\!\!\! \int \limits_{\StatSp^{\lvert\Xref\rvert } \times \StatSp^{\lvert\Xaug\rvert }}\!\!\!\!\!\!\! p\left(\y_\Xaug, \y_\Xref \right) \log\left(\frac{p\left(\y_\Xaug, \y_\Xref \right)}{p\left(\y_\Xaug\right),p\left( \y_\Xref\right)}\right) d \y_\Xaug d \y_\Xref
%- \int \limits_{\StatSp^{\lvert\Xaug\rvert }} p(\y_\Xaug) \log(p(\y_\Xaug)) d \y_\Xaug
%+\!\!\!\!\!\!\!\!\! \int \limits_{\StatSp^{\lvert\Xref\rvert } \times \StatSp^{\lvert\Xaug\rvert }}\!\!\!\!\!\!\! p(\y_\Xaug, \y_\Xref) \log(p(\y_\Xaug  \vert \y_\Xref)) d \y_\Xaug d \y_\Xref
\end{align}
respectively denote the \text{differential entropy} of~$\y_\Xaug$ and 
the \text{conditional differential entropy} of~$\y_\Xaug$ 
given~$\y_\Xref$. In practice, computing \eqref{eq:MutualInformation} for a multi-step GP prediction is intractable. However, we can obtain the single most informative data point $\xref^* \in \StatSpAug$  with respect to~$\y_{\Xref}$ by computing the unconstrained minimum of
\begin{align}
\label{eq:GP_MutualInformation}
	I(\y_{\xref}, \y_{\Xref}) = \frac{1}{2} \log \left(\frac{{\left(\kernel(\xref,\xref)+\sigon\right)}\lvert\bm{K}_{\Xref} +\sigon \Id \rvert }{\lvert{\bm{K}_{ \Xref}\cup \xref} +\sigon \Id  \rvert}\right),
\end{align}
where $\lvert \cdot \rvert$ denotes the determinant of a square matrix. In settings with unconstrained decision spaces, sequentially computing a minimizer of \eqref{eq:GP_MutualInformation} has been shown to yield a solution that corresponds to at least $63\%$ of the optimal value \citep{krause2008near}.

\section{The LocAL algorithm}
\label{sect:ExploratoryMPC}

\begin{algorithm}[t]
	\caption{LocAL (Localized Active Learning)}
	
	\begin{algorithmic}[1]
		\label{alg:EMPC}
		\REQUIRE{$\x_0$,~$\f(\cdot)$, $\kernel(\cdot,\cdot)$}
		\FOR{$\step = 0, 1, 2, 3, \ldots$}
		\STATE{Solve~\vspace*{-0.4cm}\begin{align*}
			&\xref^{*} = \arg \max \limits_{ \xref \in \StatSpAug} \  \frac{1}{2} \log \left(\frac{{\left(\kernel(\xref,\xref)+\sigon\right)}\lvert\bm{K}_{\Xref} +\sigon \Id \rvert }{\lvert{\bm{K}_{ \Xref}\cup \xref} +\sigon \Id  \rvert}\right),
			\end{align*}\vspace*{-0.7cm}}
		\STATE{Solve~\vspace*{-0.4cm}\begin{align*}
			&\bm{U}^{*} = \arg \min \limits_{\bm{U} \in \mathcal{U}^{\NMPC} } 	\sum \limits_{\step =1}^{\NMPC} %\prod \limits_{\AN_i \in \AN}  
			\left( \xref^* - \xaug_\step\right)\transp \bm{Q} \left( \xref^* - \xaug_\step\right) \\
			\text{s.t.}  \quad &
			\xsc_{\step+\MPCstep+1} = 	\fsc(\xaug_{\step+\MPCstep}) + \mu_{\step}(\xaug_{\step+\MPCstep}), \ 
			\uin_{\step+\MPCstep} \in \InputSp, \quad 
			\forall \MPCstep \in \left\{0, \ldots, \NMPC-1\right\}
			\end{align*}\vspace*{-0.7cm}}
		\STATE{Apply~$\uin_{\step+1}^*$ to system}
		\STATE{Measure~$\x_{\step+1}$, set $\Xaug = \Xaug \cup \x_{\step+1}$, and update GP 
			model~$\mu_{\step}(\cdot)$,~$\sigma^2_\step(\cdot)$}
		\ENDFOR
	\end{algorithmic}
\end{algorithm}

The system dynamics \eqref{eq:SystemDynamics} considerably limit the decision space at every time step $\step$. Furthermore, after a data point is collected, both the GP model and mutual information change. Hence, we employ a model predictive control (MPC)-based approach to steer the system towards areas 
of high information. 
Ideally, at every MPC-step $\step$, we would like to minimize \eqref{eq:MutualInformation} with respect to a series of $\NMPC$ inputs $\bm{U} \coloneqq \{ \uin_{\step},\ldots, \uin_{\step+\NMPC-1} \}$. However, this is generally infeasible, limiting its applicability in an MPC setting. %Furthermore, employing the approximation \eqref{eq:GP_MutualInformation} scales poorly with the size of the MPC horizon $\NMPC$. 
Hence, we consider an approximate solution approach that sequentially computes the most informative data point by minimizing \eqref{eq:GP_MutualInformation} separately from the MPC optimization. This is achieved as follows. At every time step $\step$, an unconstrained minimizer $\xref^*$ of \eqref{eq:GP_MutualInformation} is computed. Afterwards, the MPC computes the approximate optimal inputs $\bm{U}^*$ by minimizing a constrained optimization problem that penalizes the weighted distance to the reference point $\xref$ using a positive semi-definite weight matrix $\bm{Q}$.
%\begin{align}
%	\sum \limits_{\step =1}^{\NMPC} %\prod \limits_{\AN_i \in \AN}  
%	\left( \xref^* - \xaug_\step\right)\transp \bm{Q} \left( \xref^* - \xaug_\step\right).
%\end{align} 
The ensuing state is then measured, the GP model is updated, and the procedure is repeated. These steps yield the Localized Active Learning (LocAL) algorithm, which is presented in \Cref{alg:EMPC}.

The square weight matrix $\bm{Q} \in \mathbb{R}^{\dimx+\dimu} \times \mathbb{R}^{\dimx+\dimu}$ should be chosen such that the MPC steers the system as close to $\xref$ as possible. This represents a system-dependent task. Alternatively, $\bm{Q}$ can be chosen such that the MPC cost function corresponds to a quadratic approximation of the mutual information, e.g., such that $
	I(\y_{\xaug_\step}, \y_{\xref}) \approx \sum_{\step =1}^{\NMPC} %\prod \limits_{\AN_i \in \AN}  
	\left( \xref - \xaug_\step\right)\transp \bm{Q} \left( \xref - \xaug_\step\right)$
holds for $\xaug_\step \approx \xref$.

% the mutual information of the trajectory with respect to 
 %For physical systems,  To this end reached  the quadratic cost function in the MPC approximates the mutual information with respect to $\xref$. However, other alternatives can be employed, e.g., by choosing $\bm{Q}$ such that inputs $\bm{u}$ are penalized less strongly than $\bm{x}$.

%The computational complexity of the overall algorithm can be adjusted in various manners. %For instance, the cardinality of $\AN$ can be made small, which reduces the computational complexity of both optimization steps.
%For example, a new input can be computed only after a predefined number of time steps, as opposed to every time step. This is a commonly employed technique in MPC \citep{camacho2013model}. Furthermore, the discretization $\Xref$ can be made coarse to facilitate the solution of the first optimization step.

\subsection{Sensitivity analysis}
\label{sensitivity_analysis}

We now provide a sensitivity analysis of \eqref{eq:GP_MutualInformation}. Specifically, we show that the difference in mutual information at $\xref^*$ and $\xaug_{\step}$ is lower bounded. To this end, we require the following assumption.
\begin{assumption}
	\label{assumption:kernelassumptions} 
	The kernel $k(\cdot,\cdot)$ is Lipschitz continuous and upper bounded by $k_{\text{max}}>0$.
\end{assumption}
This assumption holds for many commonly used kernels, e.g., the squared exponential kernel.

Using \Cref{assumption:kernelassumptions}, we are able to bound the difference in mutual information at $\bm{\zeta}^*$ and an arbitrary state $\xaug_{\step}$, as detailed in the following.

\begin{theorem}
	\label{theorem:MainResult}
	Let~$\Delta \xref^*_{\step} := \xref^*- \xaug_{\step}$ be the difference between the augmented state and the most informative data point $\xref^*$  at time step $\step$. % be the prediction error of Algorithm~\ref{alg:EMPC} at time $\step$, where $\x_{\step}^*$ is the  predicted augmented state and~$\x_{\step}$ is the actual augmented system state. Moreover, let
	Moreover, let~$\Delta I^*_{\step} 
	\coloneqq I(\y_{\xref^*}, \y_{\Xref}) - I(\y_{\xaug_{\step}}, \y_{\Xref}) $ denote the 
	corresponding difference in mutual information, and let \Cref{assumption:kernelassumptions} hold. Then, there exists a constant $L \geq 0$, such that $
	 \Delta I^*_t 
	 \leq \lvert \Xref \rvert \log \left(1+ {C(\Delta 
		\bm{x}^*_{\step})}\right) /2$
	holds, where $C(\Delta \bm{x}^*_{\step}) \coloneqq {\sigon^{-2}} \min \{ k_{\text{max}}, 
	L({t+\lvert \Xref\rvert})^{-1/2} \lVert \Delta \bm{x}^*_{\step} 
	\rVert_2\}$.
\end{theorem}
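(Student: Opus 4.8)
The plan is to rewrite $\Delta I^*_\step$ as the logarithm of a determinant ratio, bound that ratio crudely, and then control the remaining perturbation both uniformly (which produces the $\kernel_{\text{max}}$ term) and via the Lipschitz continuity of the kernel (which produces the $\lVert\Delta\xref^*_\step\rVert_2$ term). First note that, since $\xref^*$ is a maximizer of $\xref\mapsto I(\y_\xref,\y_\Xref)$, we have $\Delta I^*_\step\geq 0$, so only the upper bound is at stake (and the bound is anyway vacuous when $\Delta I^*_\step<0$). Using the symmetric form $I(\y_\xref,\y_\Xref)=H(\y_\Xref)-H(\y_\Xref\mid\y_\xref)$ together with the fact that $H(\y_\Xref)$ is independent of $\xref$ and all quantities are jointly Gaussian, I would write $\Delta I^*_\step=\tfrac12\log\big(\det\bm{\Sigma}_{\xaug_\step}/\det\bm{\Sigma}_{\xref^*}\big)$, where $\bm{\Sigma}_\xref\coloneqq\mathrm{Cov}(\y_\Xref\mid\y_\xref)$ is the posterior covariance of the reference outputs given one additional noisy observation at $\xref$ (writing the noise variance as $\sigon^2$, in line with $\Sigmaon$; if the model is additionally conditioned on the data collected up to step $\step$, the same identity holds with the corresponding posterior covariances, and the conditioning set then has cardinality $\step+\lvert\Xref\rvert$). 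By Gaussian conditioning, $\bm{\Sigma}_\xref=\big(\bm{K}_\Xref+\sigon^2\Id\big)-\bm{s}(\xref)\bm{s}(\xref)\transp$ with $\bm{s}(\xref)\coloneqq\bm{\kernel}_\Xref(\xref)/\sqrt{\kernel(\xref,\xref)+\sigon^2}$, where $\bm{\kernel}_\Xref(\xref)$ is the vector of kernel values between $\xref$ and the reference points; hence $\bm{\Sigma}_{\xref^*}^{-1}\bm{\Sigma}_{\xaug_\step}=\Id+\bm{M}$ with $\bm{M}\coloneqq\bm{\Sigma}_{\xref^*}^{-1}\big(\bm{s}(\xref^*)\bm{s}(\xref^*)\transp-\bm{s}(\xaug_\step)\bm{s}(\xaug_\step)\transp\big)$, a matrix of rank at most two.

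Since each eigenvalue of $\bm{M}$ is bounded by $\lVert\bm{M}\rVert_2$ and $\det(\Id+\bm{M})>0$, this yields $\Delta I^*_\step\leq\tfrac{\lvert\Xref\rvert}{2}\log\big(1+\lVert\bm{M}\rVert_2\big)$; a sharper bound using $\mathrm{rank}(\bm{M})\leq 2$ is possible, but the crude version already reproduces the $\lvert\Xref\rvert$ prefactor in the claim. It then suffices to show $\lVert\bm{M}\rVert_2\leq C(\Delta\bm{x}^*_\step)$. I would split $\lVert\bm{M}\rVert_2\leq\lVert\bm{\Sigma}_{\xref^*}^{-1}\rVert_2\,\big\lVert\bm{s}(\xref^*)\bm{s}(\xref^*)\transp-\bm{s}(\xaug_\step)\bm{s}(\xaug_\step)\transp\big\rVert_2$. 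For the first factor, a Schur-complement argument (the kernel Gram matrix over $\Xref\cup\{\xref^*\}$ is positive semidefinite) gives $\bm{\Sigma}_{\xref^*}\succeq\sigon^2\Id$, so $\lVert\bm{\Sigma}_{\xref^*}^{-1}\rVert_2\leq\sigon^{-2}$, which is exactly the prefactor of $C$. For the second factor I would use two estimates and take their minimum: (i) a uniform bound — since $\bm{\Sigma}_{\xaug_\step}\preceq\bm{K}_\Xref+\sigon^2\Id$ while $\bm{\Sigma}_{\xref^*}\succeq\sigon^2\Id$, the difference has spectral norm controlled, via \Cref{assumption:kernelassumptions}, in terms of $\kernel_{\text{max}}$; and (ii) a Lipschitz bound — writing $\bm{s}(\xref^*)\bm{s}(\xref^*)\transp-\bm{s}(\xaug_\step)\bm{s}(\xaug_\step)\transp=\bm{s}(\xref^*)\big(\bm{s}(\xref^*)-\bm{s}(\xaug_\step)\big)\transp+\big(\bm{s}(\xref^*)-\bm{s}(\xaug_\step)\big)\bm{s}(\xaug_\step)\transp$ and using that $\xref\mapsto\bm{s}(\xref)$ is Lipschitz (its numerator is Lipschitz by \Cref{assumption:kernelassumptions}, its denominator is Lipschitz and bounded below by $\sigon>0$), the operator norm is at most $L(\step+\lvert\Xref\rvert)^{-1/2}\lVert\Delta\xref^*_\step\rVert_2$ for a suitable $L\geq 0$. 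Combining (i) and (ii) with the $\sigon^{-2}$ factor gives $\lVert\bm{M}\rVert_2\leq\sigon^{-2}\min\{\kernel_{\text{max}},\,L(\step+\lvert\Xref\rvert)^{-1/2}\lVert\Delta\xref^*_\step\rVert_2\}=C(\Delta\bm{x}^*_\step)$, and substituting into the previous display finishes the argument. (The $\kernel_{\text{max}}$ term can alternatively be read off the equivalent ratio-of-predictive-variances form of $\Delta I^*_\step$, in which the variance removed by an observation is at most $\kernel_{\text{max}}$ and the predictive variance at $\xref^*$ is at least $\sigon^2$.)

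The step I expect to be the main obstacle is (ii): making the constant $L$ uniform in $\step$, which is precisely what forces the $(\step+\lvert\Xref\rvert)$-dependent factor. The Lipschitz modulus of $\bm{s}(\cdot)$ — equivalently, of the posterior (cross-)covariance — depends on how many points the GP is conditioned on, namely the reference discretization together with the data collected so far; tracking this dependence requires careful Cauchy–Schwarz and operator-norm estimates on the Jacobian of $\bm{\kernel}_\Xref(\cdot)$ and on the posterior weight vector $(\bm{K}+\sigon^2\Id)^{-1}\bm{\kernel}(\cdot)$, so that the residual constant $L$ is genuinely independent of $\step$ and of $\xaug_\step$. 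The remaining ingredients — the matrix determinant lemma, the Schur-complement positive-definiteness bounds, and the eigenvalue/operator-norm inequalities — are routine.
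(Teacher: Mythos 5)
Your proposal takes a genuinely different route from the paper's. The paper never forms a determinant ratio: it first rewrites the mutual information as a telescoping sum over the reference points (\Cref{lemma:InformationGaintoSummationoverLogs}, the chain rule for Gaussian entropy following Srinivas et al.), so that $\Delta I^*_\step$ becomes a sum of $\lvert\Xref\rvert$ terms of the form $\tfrac12\log\bigl((\sigon^2+\sigma^2_{i+\step}(\xaug_{\text{ref},i+1}))/(\sigon^2+\sigma^2_{i+\step,*}(\xaug_{\text{ref},i+1}))\bigr)$, and then bounds each term once uniformly (posterior variances lie in $[0,\kernel_{\text{max}}]$) and once via a Lipschitz estimate for the scalar posterior variance as a function of the location of a single conditioning input, obtained from an eigenvalue bound on $\partial\K_{\step}/\partial\xaugsc_{ij}$. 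Your rank-one-downdate formulation packages the same content globally rather than term by term; it is more self-contained (no chain-rule lemma needed), and your observation that $\mathrm{rank}(\bm{M})\le 2$ would in principle give a prefactor of $2$ in place of $\lvert\Xref\rvert$, which the paper's decomposition cannot see.

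Two caveats, one minor and one substantive. Minor: in your uniform bound (i), $\lVert\bm{s}(\xref)\rVert_2^2$ is a sum of $\lvert\Xref\rvert$ squared kernel values and is only bounded by $\lvert\Xref\rvert\,\kernel_{\text{max}}$, so the route through $\lVert\bm{M}\rVert_2$ yields $\sigon^{-2}\lvert\Xref\rvert\,\kernel_{\text{max}}$ rather than $\sigon^{-2}\kernel_{\text{max}}$; your parenthetical fallback (single-point predictive-variance ratio, with variance reduction at most $\kernel_{\text{max}}$ and predictive variance at least $\sigon^2$) is the correct fix and is essentially what the paper does. Substantive: the step you flag as the main obstacle is a genuine gap, not a technicality you can defer. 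Every natural estimate of the Lipschitz modulus of $\bm{s}(\cdot)$ --- equivalently, of the posterior covariance with respect to moving one conditioning point --- \emph{grows} with the number of conditioned points; the paper's own corollary gives $L_{\sigma_n}\propto 1+\sqrt{n\,\kernel_{\text{max}}}/\sigon$ with $n=\step+\lvert\Xref\rvert$. Nothing in your argument (nor in the paper's) produces the \emph{decaying} factor $(\step+\lvert\Xref\rvert)^{-1/2}$ that appears in the theorem statement; both derivations are consistent with an exponent of $+1/2$, and you should not claim the $-1/2$ version without exhibiting a mechanism that makes the modulus shrink as data accumulates. Modulo that (apparently shared) discrepancy, your skeleton closes.
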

To prove \Cref{theorem:MainResult}, we require the following preliminary results. 

\begin{lemma}
	\label{lemma:InformationGaintoSummationoverLogs}
	For any $i \in \left\{1,\ldots,\lvert \Xref\rvert \right\}$, let $\tilde{\bm{X}}_{\text{ref},i}:=\left\{\xaug_{\text{ref},1},\ldots,\xaug_{\text{ref},i}\right\} \subseteq \Xref$ denote the subset containing the first $i$ elements of $\Xref$. Then
	\begin{align}\begin{split}
	&\argmax_{\Xaug\subset\StatSpAug,\absv{\Xaug}=n} 	I(\y_{\mathcal{X}}, 
	\y_{\mathcal{A}})  = \argmax_{\Xaug\subset\StatSpAug,\absv{\Xaug}=n}  -\sum 
	\limits_{i=0}^{\lvert \Xref \rvert-1}\log(2\pi 
	e(\sigon^2 + \sigma^2_{i+\lvert \Xaug \rvert 
	}(\xaug_{\text{ref},i+1})))/2,
	\end{split}
	\end{align}
	where~$\sigma_{i+\lvert \Xaug \rvert }(\xaug_{\step}) \coloneqq  
	\sigma(\xaug_{\text{ref},\step} \vert \y_{\Xaug \cup \tilde{\bm{X}}_{\text{ref},i}})$ denotes the posterior GP variance evaluated 
	at~$\xaug_{\text{ref},\step}$ conditioned on 
	training data observed at~$\Xaug \cup \tilde{\bm{X}}_{\text{ref},i}$.
\end{lemma}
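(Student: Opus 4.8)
The plan is to rewrite the mutual information as a difference of differential entropies, drop the term that does not depend on the training set $\Xaug$, and then reduce the surviving conditional entropy to the stated sum via the chain rule for entropy. By symmetry of mutual information, $I(\y_{\Xaug},\y_{\Xref}) = H(\y_{\Xref}) - H(\y_{\Xref}\mid\y_{\Xaug})$, where $H$ denotes (conditional) differential entropy. Since $\y_{\Xref}$ is jointly Gaussian with covariance $\K_{\Xref}+\sigon^2\Id$, the marginal entropy $H(\y_{\Xref}) = \tfrac{1}{2}\log\!\left((2\pi e)^{\lvert\Xref\rvert}\lvert\K_{\Xref}+\sigon^2\Id\rvert\right)$ depends only on the fixed reference set $\Xref$ and not on $\Xaug$. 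Therefore, over $\{\Xaug\subset\StatSpAug : \lvert\Xaug\rvert = n\}$, we have $\argmax_{\Xaug} I(\y_{\Xaug},\y_{\Xref}) = \argmin_{\Xaug} H(\y_{\Xref}\mid\y_{\Xaug}) = \argmax_{\Xaug}\left(-H(\y_{\Xref}\mid\y_{\Xaug})\right)$.

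It then remains to identify $H(\y_{\Xref}\mid\y_{\Xaug})$ with the sum in the statement. I would fix the ordering $\xaug_{\text{ref},1},\ldots,\xaug_{\text{ref},\lvert\Xref\rvert}$ of $\Xref$ and apply the chain rule,
\[
H(\y_{\Xref}\mid\y_{\Xaug}) = \sum_{i=0}^{\lvert\Xref\rvert-1} H\!\left(\ysc_{\text{ref},i+1}\mid\y_{\Xaug},\ysc_{\text{ref},1},\ldots,\ysc_{\text{ref},i}\right),
\]
then evaluate each summand with the GP model. Conditioning the GP on the observations at the $i+\lvert\Xaug\rvert$ inputs $\Xaug\cup\tilde{\bm{X}}_{\text{ref},i}$ makes $\ysc_{\text{ref},i+1}$ a scalar Gaussian whose variance is the posterior variance of the latent function at $\xaug_{\text{ref},i+1}$ plus the observation noise variance, i.e.\ $\sigon^2+\sigma^2_{i+\lvert\Xaug\rvert}(\xaug_{\text{ref},i+1})$ with $\sigma_{i+\lvert\Xaug\rvert}(\xaug_{\text{ref},i+1}) := \sigma(\xaug_{\text{ref},i+1}\mid\y_{\Xaug\cup\tilde{\bm{X}}_{\text{ref},i}})$. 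Applying the scalar Gaussian entropy identity $H(\mathcal{N}(\mu,s^2)) = \tfrac{1}{2}\log(2\pi e\,s^2)$ term by term gives $H(\y_{\Xref}\mid\y_{\Xaug}) = \sum_{i=0}^{\lvert\Xref\rvert-1}\tfrac{1}{2}\log\!\left(2\pi e\,(\sigon^2+\sigma^2_{i+\lvert\Xaug\rvert}(\xaug_{\text{ref},i+1}))\right)$, and substituting this into the identity from the first paragraph yields exactly the claimed equality of argmax sets.

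The step demanding the most care is the evaluation of each chain-rule summand. Formally, $H(\ysc_{\text{ref},i+1}\mid\y_{\Xaug},\ysc_{\text{ref},1},\ldots,\ysc_{\text{ref},i})$ is an average of $H(\mathcal{N}(\mu,s^2))$ over the realizations of the conditioning variables; it collapses to the single deterministic term above precisely because the Gaussian posterior variance $s^2$ of a GP is a function of the input locations $\Xaug\cup\tilde{\bm{X}}_{\text{ref},i}$ and $\xaug_{\text{ref},i+1}$ only, never of the realized targets --- a standard property of Gaussian conditioning that I would state explicitly. A secondary point worth one sentence is that the ordering of $\Xref$ used in the chain rule is immaterial: the left-hand side is order-independent by construction and every ordering produces the same total sum on the right, so the identity holds in particular for the ordering induced by the indexing of $\Xref$. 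Everything else is routine bookkeeping with the Gaussian entropy formula and the definition of $\sigma_{i+\lvert\Xaug\rvert}$.
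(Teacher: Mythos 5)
Your proof is correct and takes essentially the same route as the paper's: both reduce the mutual information to the $\Xaug$-independent term $H(\y_{\Xref})$ minus the conditional entropy $H(\y_{\Xref}\mid\y_{\Xaug})$, and then expand the latter via the Gaussian chain rule into the stated sum of log posterior variances (the paper, following Srinivas et al., routes this through the joint entropy $H(\y_{\Xaug}\cup\y_{\Xref})$, which is the same computation). Your explicit observation that each chain-rule summand collapses to a deterministic term because the GP posterior variance depends only on input locations, not on realized targets, is a detail the paper leaves implicit.
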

\begin{proof}
	The proof follows directly from the proof of \cite[Lemma 5.3]{Srinivas2012}. 
	Let~$\ysc_i \in \y_{\Xaug}$ denote a single training target. For a GP, 
	\begin{align}\begin{split}	
	H(\y_{\Xaug}) &= H(\y_{\Xaug} \backslash \ysc_n) + H(\ysc_n  \vert 
	\y_{\Xaug} \backslash \ysc_n) = H(\y_{\Xaug} \backslash \ysc_n) + 
	\log\left(2\pi e(\sigon^2 + \sigma^2_{n-1}(\xaug_n))\right)/2 \\&= 
	\sum \limits_{i=1}^{n-1}\log\left(2\pi e(\sigon^2 + 
	\sigma^2_{i}(\xaug_{i+1}))\right)/2
	\end{split}
	\end{align}
	holds. Hence,
	\begin{align}
	\begin{split}
	H(\y_{\Xaug} \cup \y_{\mathcal{A}}) =H(\y_{\Xaug}) + \sum 
	\limits_{i=1}^{\lvert \Xref \rvert-1}\log\left(2\pi e(\sigon^2 + 
	\sigma^2_{i+\lvert \Xaug \rvert }(\xaug_{\text{ref},i+1}))\right)/2
	\end{split}
	\end{align}
	Since, the points~$\y_{\mathcal{A}}$ are fixed, optimizing 
	\eqref{eq:MutualInformation} with respect to~$\y_{\Xaug}$ is equivalent to 
	optimizing~$H(\y_\Xaug) - H(\y_{\Xref} \cup \y_{\Xaug})= -\sum_{i=1}^{\lvert 
		\Xref \rvert-1}\log(2\pi e(\sigon^2 + \sigma^2_{i+\lvert \Xaug 
		\rvert 
	}(\xaug_{\text{ref},i+1})))/2$. 
\end{proof}
Hence, we employ the objective function
\begin{align}
\label{eq:ObjectiveFunction}
\begin{split}
J(\Xaug) &= H(\y_{\Xaug}) - H(\y_{\Xaug} \cup \y_{\Xref}) = \sum \limits_{i=1}^{\lvert \Xref \rvert-1} -\log(2\pi e(\sigon^2 + \sigma^2_{i+\lvert \Xaug 
	\rvert }(\xaug_{\text{ref},i+1})))/2.
\end{split}
\end{align}
with 
\begin{align}
J_{i}(\Xaug) = -\log\left(2\pi e(\sigon^2 + \sigma^2_{i+\lvert \Xaug 
	\rvert }(\xaug_{\text{ref},i+1}))\right)/2.
\end{align}
%In order to help develop an intuition for \eqref{eq:ObjectiveFunction}, we present the following result:
%\begin{corollary}
%	Let~$J(\cdot)$ be defined as in \eqref{eq:ObjectiveFunction}. Then
%	\begin{align}
%	\label{eq:CorollaryforIntuitionofA}
%	\Xref \in \argmax_{\Xaug\subset\StatSpAug,\absv{\Xaug}=n} J(\Xaug) 
%	\end{align}
%\end{corollary}
\begin{proof}
	This follows directly from the property $
	I(\y_{\mathcal{X}}, \y_{\mathcal{A}}) \leq I(\y_{\mathcal{A}}, 
	\y_{\mathcal{A}})$ and Lemma \ref{lemma:InformationGaintoSummationoverLogs}.
\end{proof}

\begin{lemma}
	Let $\K_{\step}$ be the covariance matrix used to compute $\sigma^2_n(\cdot)$, and let $\partial \K_{\step}  / \partial \xaugsc_{ij}$ denote the matrix of partial derivatives of $\bm{K}_{\step}$ with respect to the $i$-th entry of the $j$-th data point $\xaug_j$. Then $
	\lambda_{\text{max}} \left( \partial \K_{\step} / \partial \xaugsc_{ij} \right) \leq 2L_k\sqrt{t}$ holds, where $\lambda_{\text{max}}(\cdot)$ denotes the maximal eigenvalue.
\end{lemma}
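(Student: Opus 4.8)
The proof plan rests on two facts: the derivative matrix $M\coloneqq\partial\K_\step/\partial\xaugsc_{ij}$ is extremely sparse, and \Cref{assumption:kernelassumptions} bounds its nonzero entries. First I would record the sparsity pattern: the $(a,b)$ entry of $\K_\step$ is $\kernel(\xaug_a,\xaug_b)$, which depends on the $j$-th data point $\xaug_j$ only if $a=j$ or $b=j$, so $M$ vanishes outside its $j$-th row and $j$-th column. Since $\kernel(\cdot,\cdot)$ is symmetric, $M$ is symmetric as well. Writing $N$ for the number of rows of $\K_\step$ --- which, by construction of \Cref{alg:EMPC} together with the conditioning in \Cref{lemma:InformationGaintoSummationoverLogs}, is of order $\step$, in particular $N\le\step$ --- this ``cross'' has at most $2N-1$ nonzero entries.

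Second, I would bound those entries using \Cref{assumption:kernelassumptions}: assuming additionally, as is implicit whenever one speaks of a matrix of partial derivatives, that $\kernel$ is continuously differentiable at the data points (true for the squared exponential and the other kernels the assumption targets), the Lipschitz constant $L_k$ bounds $\partial\kernel(\xaug_a,\xaug_b)/\partial\xaugsc_{ij}$ in magnitude by $L_k$ at the off-diagonal positions $(a,j)$ and $(j,b)$ with $a,b\neq j$, and by $\sqrt{2}\,L_k$ at the single position $(j,j)$, where both kernel arguments move with $\xaug_j$.

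Third, for the estimate itself, I would decompose $M=\bm{e}_j\bm{v}\transp+\bm{v}\bm{e}_j\transp$, where $\bm{e}_j$ is the $j$-th canonical basis vector and $\bm{v}\in\rationals^{N}$ is the $j$-th row of $M$ with its diagonal entry halved; checking entrywise confirms this identity (the diagonal entry of $M$ comes back as $v_j+v_j$, the off-diagonal entries as the corresponding row or column entries). Then $\bm{x}\transp M\bm{x}=2x_j(\bm{v}\transp\bm{x})$ for every unit vector $\bm{x}$, so $\lvert\bm{x}\transp M\bm{x}\rvert\le 2\lvert x_j\rvert\,\norm{\bm{v}}_2\le 2\norm{\bm{v}}_2$. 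Here $\norm{\bm{v}}_2^2$ is a sum of at most $N-1$ terms bounded by $L_k^2$ plus the halved diagonal term, bounded by $(\sqrt{2}L_k/2)^2=L_k^2/2$, hence $\norm{\bm{v}}_2^2<N L_k^2$. Because $M$ is symmetric, $\lambda_{\text{max}}(M)=\max_{\norm{\bm{x}}_2=1}\bm{x}\transp M\bm{x}\le 2L_k\sqrt{N}\le 2L_k\sqrt{\step}$, which is the claim. An equivalent route is $\lambda_{\text{max}}(M)\le\norm{M}_2\le\norm{M}_F$ with $\norm{M}_F^2\le 2NL_k^2$.

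The main obstacle is not the linear algebra, which is routine once the cross structure is seen, but the bookkeeping around $N$: one must verify that the matrix $\K_\step$ relevant to the sensitivity argument has $O(\step)$ rows, so that $\sqrt{N}\le\sqrt{\step}$ after absorbing any $\lvert\Xref\rvert$-dependent factors, and one must be explicit that \Cref{assumption:kernelassumptions} is used together with differentiability of $\kernel$ --- otherwise only directional difference quotients, via the mean value inequality, are available, which changes the phrasing but not the bound.
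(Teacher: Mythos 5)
Your proposal is correct and follows essentially the same route as the paper: identify the cross-shaped sparsity of $\partial \K_{\step}/\partial \xaugsc_{ij}$, bound its nonzero entries by the kernel's Lipschitz constant, and conclude $\lambda_{\text{max}} \leq 2L_k\sqrt{t}$. The only cosmetic difference is in the final step --- the paper writes down the two nonzero eigenvalues of the arrowhead matrix explicitly as $\bigl(d \pm \sqrt{d^2 + 4\sum_{\alpha\neq j}c_\alpha^2}\bigr)/2$ and bounds that expression, whereas you bound the Rayleigh quotient via the rank-two decomposition $\bm{e}_j\bm{v}\transp + \bm{v}\bm{e}_j\transp$; these are the same computation, and your added care about the diagonal entry and about differentiability versus mere Lipschitz continuity only tightens the argument.
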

\begin{proof}
	The matrix of partial derivatives has a single nonzero column and a single nonzero row. Its entries are
	\begin{align}
	\left[ \frac{\partial \K_{\step}  }{\partial \xaugsc_{ij}} \right]_{\alpha \beta} = \begin{cases}
	\frac{\partial \kernel(\xaug_\alpha, \xaug_\beta)  }{\partial \xaugsc_{i\alpha}}, & \text{if} \quad \alpha = j \\
	\frac{\partial \kernel(\xaug_\alpha, \xaug_\beta)  }{\partial \xaugsc_{i\beta}}, & \text{if} \quad \beta = j \\
	0 & \text{otherwise}
	\end{cases}
	\end{align}
	and its eigenvalues are given by
	\begin{align*}
	\lambda \left(\frac{\partial \K_{\step}  }{\partial \xaugsc_{ij}} \right) = 
	\frac{\frac{\partial \kernel(\xaug_j, \xaug_j)  }{\partial 
			\xaugsc_{ij}} \!\pm\! \sqrt{\left(\frac{\partial \kernel(\xaug_j, \xaug_j)  
			}{\partial \xaugsc_{ij}}\right)^2 \!+\! 4 \sum\limits_{\alpha=1,\alpha \neq 
				j}^\step \left(\frac{\partial \kernel(\xaug_j, \xaug_\alpha)  
			}{\partial \xaugsc_{ij}} \right)^2 }}{2} \leq \frac{L_k 
		\left(1\!+\!\sqrt{1\!+\!4\step} \right)}{2} \leq 2L_k\sqrt{t}.
	\end{align*}
\end{proof}
\begin{corollary}
	Let \Cref{assumption:kernelassumptions} hold. Then for a fixed set of $n$ data points the posterior covariance $\sigma^2_n(\xaug_{\text{ref}})$  is Lipschitz continuous with respect to the data, with Lipschitz constant given by $L_{\sigma_n} = 2L_{\kernel} \sqrt{\kernel_{\text{max}}}/\sigma_{\text{on}}(1 +\sqrt{n \kernel_{\text{max}}}/\sigma_{\text{on}} )$.
\end{corollary}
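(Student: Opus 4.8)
The plan is to differentiate the closed form of the posterior variance with respect to the training inputs and to bound the resulting derivative uniformly. Abbreviate $\bm A \coloneqq (\K + \sigon^2\Id)\inv$, so that $\sigma^2_n(\xaug_{\text{ref}}) = \kernel(\xaug_{\text{ref}},\xaug_{\text{ref}}) - \bm{\kernel}\transp(\xaug_{\text{ref}})\,\bm A\,\bm{\kernel}(\xaug_{\text{ref}})$, where $\bm{\kernel}(\xaug_{\text{ref}})$ and $\K$ depend on the training points $\xaug_1,\ldots,\xaug_n$ while $\xaug_{\text{ref}}$ and the kernel hyperparameters are held fixed. The first summand is therefore constant in the data, so it suffices to bound $\partial q/\partial\xaugsc_{ij}$ with $q \coloneqq \bm{\kernel}\transp(\xaug_{\text{ref}})\,\bm A\,\bm{\kernel}(\xaug_{\text{ref}})$ and $\xaugsc_{ij}$ an arbitrary coordinate of a training point $\xaug_j$ (differentiability is inherited from that of $\kernel$ already used in the preceding lemma; under only Lipschitz continuity the same computation goes through with difference quotients). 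Using $\partial\bm A/\partial\xaugsc_{ij} = -\bm A(\partial\K/\partial\xaugsc_{ij})\bm A$ and $\bm A = \bm A\transp$, the product rule gives
\begin{align*}
\frac{\partial q}{\partial\xaugsc_{ij}} = 2\,\bm{\kernel}\transp(\xaug_{\text{ref}})\,\bm A\,\frac{\partial\bm{\kernel}(\xaug_{\text{ref}})}{\partial\xaugsc_{ij}} \;-\; \bm{\kernel}\transp(\xaug_{\text{ref}})\,\bm A\,\frac{\partial\K}{\partial\xaugsc_{ij}}\,\bm A\,\bm{\kernel}(\xaug_{\text{ref}}).
\end{align*}

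Next I would bound the two terms using three ingredients. (i) Since $\K\succeq 0$, $\lambda_{\min}(\K+\sigon^2\Id)\ge\sigon^2$, hence $\norm{\bm A}_2\le\sigon^{-2}$ and $\norm{\bm A^{1/2}}_2\le\sigon\inv$. (ii) Nonnegativity of the posterior variance gives $q = \kernel(\xaug_{\text{ref}},\xaug_{\text{ref}}) - \sigma^2_n(\xaug_{\text{ref}})\le\kernel(\xaug_{\text{ref}},\xaug_{\text{ref}})\le\kernel_{\text{max}}$, i.e.\ $\norm{\bm A^{1/2}\bm{\kernel}(\xaug_{\text{ref}})}_2\le\sqrt{\kernel_{\text{max}}}$; this is the estimate that produces $\sqrt{\kernel_{\text{max}}}$ rather than a crude $\sqrt{n}\,\kernel_{\text{max}}$. (iii) The vector $\partial\bm{\kernel}(\xaug_{\text{ref}})/\partial\xaugsc_{ij}$ has a single nonzero entry $\partial\kernel(\xaug_j,\xaug_{\text{ref}})/\partial\xaugsc_{ij}$, bounded in magnitude by $L_{\kernel}$ by Lipschitz continuity, so its Euclidean norm is at most $L_{\kernel}$; and by the preceding lemma $\norm{\partial\K/\partial\xaugsc_{ij}}_2 = \lambda_{\text{max}}(\partial\K/\partial\xaugsc_{ij})\le 2L_{\kernel}\sqrt{n}$ (the matrix being symmetric). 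Inserting $\bm A^{1/2}\bm A^{1/2}$ into the first term and applying Cauchy--Schwarz gives $\lvert 2\,\bm{\kernel}\transp\bm A\,(\partial\bm{\kernel}/\partial\xaugsc_{ij})\rvert \le 2\norm{\bm A^{1/2}\bm{\kernel}(\xaug_{\text{ref}})}_2\,\norm{\bm A^{1/2}}_2\,\norm{\partial\bm{\kernel}/\partial\xaugsc_{ij}}_2 \le 2L_{\kernel}\sqrt{\kernel_{\text{max}}}/\sigon$. Writing the second term as $(\bm A^{1/2}\bm{\kernel})\transp\big(\bm A^{1/2}(\partial\K/\partial\xaugsc_{ij})\bm A^{1/2}\big)(\bm A^{1/2}\bm{\kernel})$ and bounding the middle factor by $\norm{\bm A^{1/2}}_2^2\,\norm{\partial\K/\partial\xaugsc_{ij}}_2\le 2L_{\kernel}\sqrt{n}/\sigon^2$ yields $\lvert\bm{\kernel}\transp\bm A(\partial\K/\partial\xaugsc_{ij})\bm A\bm{\kernel}\rvert\le 2L_{\kernel}\kernel_{\text{max}}\sqrt{n}/\sigon^2$.

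Summing, $\lvert\partial\sigma^2_n(\xaug_{\text{ref}})/\partial\xaugsc_{ij}\rvert = \lvert\partial q/\partial\xaugsc_{ij}\rvert \le 2L_{\kernel}\sqrt{\kernel_{\text{max}}}/\sigon + 2L_{\kernel}\kernel_{\text{max}}\sqrt{n}/\sigon^2 = (2L_{\kernel}\sqrt{\kernel_{\text{max}}}/\sigon)\big(1+\sqrt{n\kernel_{\text{max}}}/\sigon\big) = L_{\sigma_n}$, uniformly in the data and in the chosen coordinate. Lipschitz continuity with the stated constant then follows by integrating this derivative bound along the segment joining two data configurations (fundamental theorem of calculus / mean value theorem). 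The main point to get right is the $\bm A$-derivative term: a naive estimate replacing $\norm{\bm A^{1/2}\bm{\kernel}}_2^2\le\kernel_{\text{max}}$ by $\norm{\bm{\kernel}}_2^2\norm{\bm A}_2\le n\kernel_{\text{max}}^2/\sigon^2$ would produce an $n^{3/2}$ factor, so combining $q\le\kernel_{\text{max}}$ with the eigenvalue bound of the preceding lemma is what keeps the constant in the stated form; if one insists on Euclidean Lipschitz continuity with respect to an entire data point rather than per coordinate, a harmless dimension factor $\sqrt{\dimx+\dimu}$ enters the second term.
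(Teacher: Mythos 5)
Your proposal is correct and follows essentially the same route as the paper: the same two-term derivative decomposition, the same use of the preceding eigenvalue lemma for $\partial\K/\partial\xaugsc_{ij}$, and the same key estimate $\bm{\kernel}\transp(\xaug_{\text{ref}})(\K+\sigon^2\Id)\inv\bm{\kernel}(\xaug_{\text{ref}})\leq \kernel_{\text{max}}$ from nonnegativity of the posterior variance, yielding the identical constant. Your handling of the second term via the spectral norm of the symmetric (indefinite) matrix $\partial\K/\partial\xaugsc_{ij}$ is a minor technical cleanup of the paper's formally written square root of that matrix, not a different argument.
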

\begin{proof}	
	For an arbitrary $\xaug_{\text{ref}} \in \Xref$, data point $\xaug_j$, and corresponding entry $\xaugsc_{ij}$,
	\begin{align*}
	\begin{split}
	\frac{\partial \sigma^2_n(\xaug_{\text{ref}})}{\partial \xaugsc_{ij}}  = &-2 
	\frac{\partial \bm{\kernel}(\xaug_{\text{ref}})}{\partial \xaugsc_{ij}}\transp \left(\K + \sigma_{\text{on}}^2 \Id \right)^{-1} 
	\bm{\kernel}(\xaug_{\text{ref}}) \\ &-\bm{\kernel}(\xaug_{\text{ref}})\transp \left(\K + \sigma_{\text{on}}^2 \Id \right)^{-1} \frac{\partial \K  }{\partial \xaugsc_{ij}} \left(\K + \sigma_{\text{on}}^2 \Id \right)^{-1}
	\bm{\kernel}(\xaug_{\text{ref}}) \\
	%			\leq & 2 \Big\lVert \frac{\partial \bm{\kernel}(\xaug_{\text{ref}})}{\partial 
	%\xaugsc_{ij}}\transp \left(\K + \sigma_{\text{on}}^2 \Id \right)^{-\frac{1}{2}} 
	%\Big\rVert_2 \Big\rVert \left(\K + \sigma_{\text{on}}^2 \Id 
	%\right)^{-\frac{1}{2}} \bm{\kernel}(\xaug_{\text{ref}}) \Big\rVert_2 \\
	%			& + \Big\lVert  \left(\frac{\partial \K  }{\partial 
	%\xaugsc_{ij}}\right)^{\frac{1}{2}} \left(\K + \sigma_{\text{on}}^2 \Id 
	%\right)^{-1}
	%			\bm{\kernel}(\xaug_{\text{ref}}) \Big\rVert_2^2 \\
	\leq & 2 \lambda_{\text{max}} \left( \left(\K + \sigma_{\text{on}}^2 \Id \right)^{-\frac{1}{2}} \right) \Big\rVert\frac{\partial \bm{\kernel}(\xaug_{\text{ref}})}{\partial \xaugsc_{ij}} \Big\lVert_2\Big\lVert \left(\K + \sigma_{\text{on}}^2 \Id \right)^{-\frac{1}{2}}
	\bm{\kernel}(\xaug_{\text{ref}}) \Big\rVert_2  \\
	& + \lambda^2_{\text{max}} \left(  \left(\frac{\partial \K  }{\partial \xaugsc_{ij}}\right)^{\frac{1}{2}} \left(\K + \sigma_{\text{on}}^2 \Id \right)^{-\frac{1}{2}} \right) \Big\lVert \left(\K + \sigma_{\text{on}}^2 \Id \right)^{-\frac{1}{2}}
	\bm{\kernel}(\xaug_{\text{ref}}) \Big\rVert_2^2 \\
	\leq &  \frac{2}{\sigma_{\text{on}}}  L_{\kernel} \Big\lVert \left(\K + \sigma_{\text{on}}^2 \Id \right)^{-\frac{1}{2}}
	\bm{\kernel}(\xaug_{\text{ref}}) \Big\rVert_2  +  \frac{2L_{\kernel}\sqrt{n}}{\sigma_{\text{on}}^2} \Big\lVert \left(\K + \sigma_{\text{on}}^2 \Id \right)^{-\frac{1}{2}}
	\bm{\kernel}(\xaug_{\text{ref}}) \Big\rVert_2^2 \\ 
	\leq & \frac{2}{\sigma_{\text{on}}}  L_{\kernel} \sqrt{\kernel_{\text{max}}}  +  \frac{2L_{\kernel}\sqrt{n}}{\sigma_{\text{on}}^2} \kernel_{\text{max}}.
	\end{split}
	\end{align*}
	Here we employ the identities
	\begin{align}
	\bm{\kernel}(\xaug_{\text{ref}})\transp \frac{\partial \left(\K + \sigma_{\text{on}}^2 \Id \right)^{-1}  }{\partial \xaugsc_{ij}} 	\bm{\kernel}(\xaug_{\text{ref}}) &= \bm{\kernel}(\xaug_{\text{ref}})\transp \left(\K + \sigma_{\text{on}}^2 \Id \right)^{-1} \frac{\partial \K  }{\partial \xaugsc_{ij}} \left(\K + \sigma_{\text{on}}^2 \Id \right)^{-1}
	\bm{\kernel}(\xaug_{\text{ref}})
	\end{align}
	and
	\begin{align}
	\Big\lVert \left(\K + \sigma_{\text{on}}^2 \Id \right)^{-\frac{1}{2}}
	\bm{\kernel}(\xaug_{\text{ref}}) \Big\rVert_2^2 &= \bm{\kernel}(\xaug_{\text{ref}})\transp \left(\K + \sigma_{\text{on}}^2 \Id \right)^{-1} \bm{\kernel}(\xaug_{\text{ref}}) \leq \kernel(\xaug_{\text{ref}},\xaug_{\text{ref}}) \leq \kernel_{\text{max}},
	\end{align}
	where the latter identity follows from the positivity of $\sigma^2_n(\cdot)$.
\end{proof}
We can now prove \Cref{theorem:MainResult}
\begin{*proofofthm1}
	Define $\sigma^2_{i+\step,*}\!(\cdot) \!\!\coloneqq\!\! \sigma^2(\cdot \vert \y_{\Xaug_\step \cup \tilde{\bm{X}}_{\text{ref},i}\cup \x^*_{\step}} \!)$ and $\sigma^2_{i+\step}(\cdot) \!\!\coloneqq\!\! \sigma^2(\cdot \vert \y_{\Xaug_\step \cup \tilde{\bm{X}}_{\text{ref},i}\cup \x_{\step}} \!)$. Moreover, assume without loss of generality that $\sigma^2_{i+\step,*}\!(\xaug_{\text{ref},i+1}\!) \!\!\geq\!\! \sigma^2_{i+\step}(\xaug_{\text{ref},i+1}\!)$ holds. Then, due to Assumption \ref{assumption:kernelassumptions} and Lemma \ref{lemma:InformationGaintoSummationoverLogs},
	\begin{align}
	\label{eq:InformationLoss}
	\begin{split}
	\Delta I^*_{\step} &=   -\sum 
	\limits_{i=0}^{\lvert \Xref \rvert-1}\log\left(2\pi 
	e(\sigon^2 + \sigma^2_{i+\step,*}(\xaug_{\text{ref},i+1}))\right)/2 + \sum 
	\limits_{i=0}^{\lvert \Xref \rvert-1}\log(2\pi 
	e(\sigon^2 + \sigma^2_{i+\step}(\xaug_{\text{ref},i+1})))/2 \\
	&= \underbrace{\sum 
		\limits_{i=0}^{\lvert \Xref \rvert-1}\frac{1}{2}\log\left(\frac{\sigon^2 + \sigma^2_{i+\step}(\xaug_{\text{ref},i+1})}{\sigon^2 + \sigma^2_{i+\step,*}(\xaug_{\text{ref},i+1})}\right)}_{\leq \lvert \Xref \rvert \frac{1}{2}\log \left(1+ \frac{k_{\text{max}}}{\sigma_{\text{on}}^2}\right)}\\&= \underbrace{\sum 
		\limits_{i=0}^{\lvert \Xref \rvert-1}\frac{1}{2}\log \left(1 + \frac{\sigma^2_{i+\step}(\xaug_{\text{ref},i+1})-\sigma^2_{i+\step,*}(\xaug_{\text{ref},i+1})}{\sigon^2 + \sigma^2_{i+\step,*}(\xaug_{\text{ref},i+1})}\right)}_{\leq \lvert \Xref \rvert \frac{1}{2}\log \left(1 +  L_{\sigma_{\step+\lvert\Xref\rvert}}\frac{\lVert \xaug \rVert_2}{\sigma^2_{\text{on}}} \right)}.
	\end{split}
	\end{align}
\end{*proofofthm1}

In particular, \Cref{theorem:MainResult} implies that the mutual information with respect to $\xref^*$ can be approximated arbitrarily accurately by reducing the difference $\xref^*-\xaug_{\step}$. In many control applications, this can be achieved in spite of the model error, e.g., by increasing control gains \cite{Capone2019BacksteppingFP,umlauft2020feedback}. Hence, \Cref{theorem:MainResult} can be potentially employed to guarantee a gradual improvement in model accuracy, or even to derive the worst-case number of iterations required to learn the system.
%
%MPC firstly employs the system model to find a finite trajectory of optimal augmented states~$\{\xaug^{*}_{\step+1}, \ldots, \xaug^{*}_{\step+\NMPC}\}:= \Xaug_{MPC}^*$ by attempting to minimize 
%\eqref{eq:ObjectiveFunction}. Subsequently, the first optimal input~$\uin^*_{\step+1}$ is 
%applied to the system. Hence, a feedback control law is obtained by solving the 
%finite time open-loop problem at every iteration step. Additionally, after 
%every iteration step, the system measurements are used to update the model 
%$\widehat{g}(\cdot)$, which in turn is employed to compute the exploratory 
%control law. These steps are summarized in \Cref{alg:EMPC}. 

%For the sake of simplicity, we henceforth employ the notation $\Xaugpast$ to refer to the data points collected up to the current time step $\step$, and $\Xaug = \Xaugpast \cup \Xaug_{MPC}$ to refer to the subsumption of the collected data points and those belonging to the MPC horizon.

\section{Numerical Experiments}
\label{sect:Examples}

In this section, we apply the proposed algorithm to four different dynamical systems. We begin with a toy example, with which we can easily illustrate the explored portions of the state space. Afterwards, we apply the proposed approach to a pendulum, a cart-pole, and a synthetic model that generalizes the mountain car problem. The exploration is repeated~$50$ times for each system using different starting points~$\bm{x}_0$ sampled from a normal distribution. To quantify the performance of each approach, we compute the root mean square model error (RMSE) on $500$ points sampled from a uniform distribution on the region of interest $\StatSpAugRef$.

We employ a squared-exponential kernel in all examples, and train the hyperparemters online using gradient-based log likelihood maximization \citep{Rasmussen2006}. We employ an MPC horizon of $\NMPC=10$, and choose weight matrix for the MPC optimization step as
\begin{align*}
 \bm{Q} = \sum \limits_{d=1}^{\dimx} \sigma_{g_d} \text{diag}(l^{-2}_{1,g_d}, \ldots, l^{-2}_{\dimx,g_d}) \quad \forall i=1,\ldots,N,
\end{align*}
where $\sigma_{g_d}$ denotes the standard deviation of the GP kernel corresponding to the $d$-th dimension, and $l^{-2}_{1,g_d}, \ldots, l^{-2}_{\dimx,g_d}$ denote the corresponding lengthscales.
In order to ease the computational burden, we apply the first $7$ inputs computed by the LocAL algorithm before computing a new solution. 

We additionally explore each system using a greedy entropy-based cost function, as suggested in \cite{koller2018learning} and 
\cite{schreiter2015safe}, and compare the results. In all three cases, the LocAL algorithm yields a better model in the regions of interest than the entropy-based algorithm.

\subsection{Toy Problem}
\label{subsect:ToyProblem}

Consider the continuous-time nonlinear dynamical system
\begin{align}
\label{eq:ToyProblem}
	\dot{x} = 10(\sin(x) + \arctan(x)+ u),
\end{align}
with state space $\mathcal{X} = \mathbb{R}$ and input space $\mathcal{U} = [-5,5]$. We are interested in obtaining an accurate dynamical model within the region~$\StatSpAugRef = \{ [x \ u]\transp \in \StatSpAug%  \in \mathcal{X}\times \mathcal{U} 
\ \vert \ x \in [-\pi,\pi], u\in [-1,1]\} $. To obtain a discrete-time system in the form of \eqref{eq:SystemDynamics}, we discretize \eqref{eq:ToyProblem} with a discretization step of $\Delta t = 0.1$ and set the prior model to ${\fsc(x_{\step},u_{\step}) = x}$. The results are displayed in \Cref{fig:ToyProbResults}.

The LocAL algorithm yields a substantial improvement in model accuracy in every run. This is because the system stays close to the region of interest $\StatSpAugRef$ during the whole simulation. By contrast, the greedy entropy-based method covers a considerably more extensive portion of the state space. This comes at the cost of a poorer model on~$\StatSpAugRef$, as indicated by the respective RMSE.

\begin{figure*}[t]
	\setlength{\fboxsep}{1pt}%
	\setlength{\fboxrule}{0pt}%
	\fbox{\includegraphics[scale=0.168]{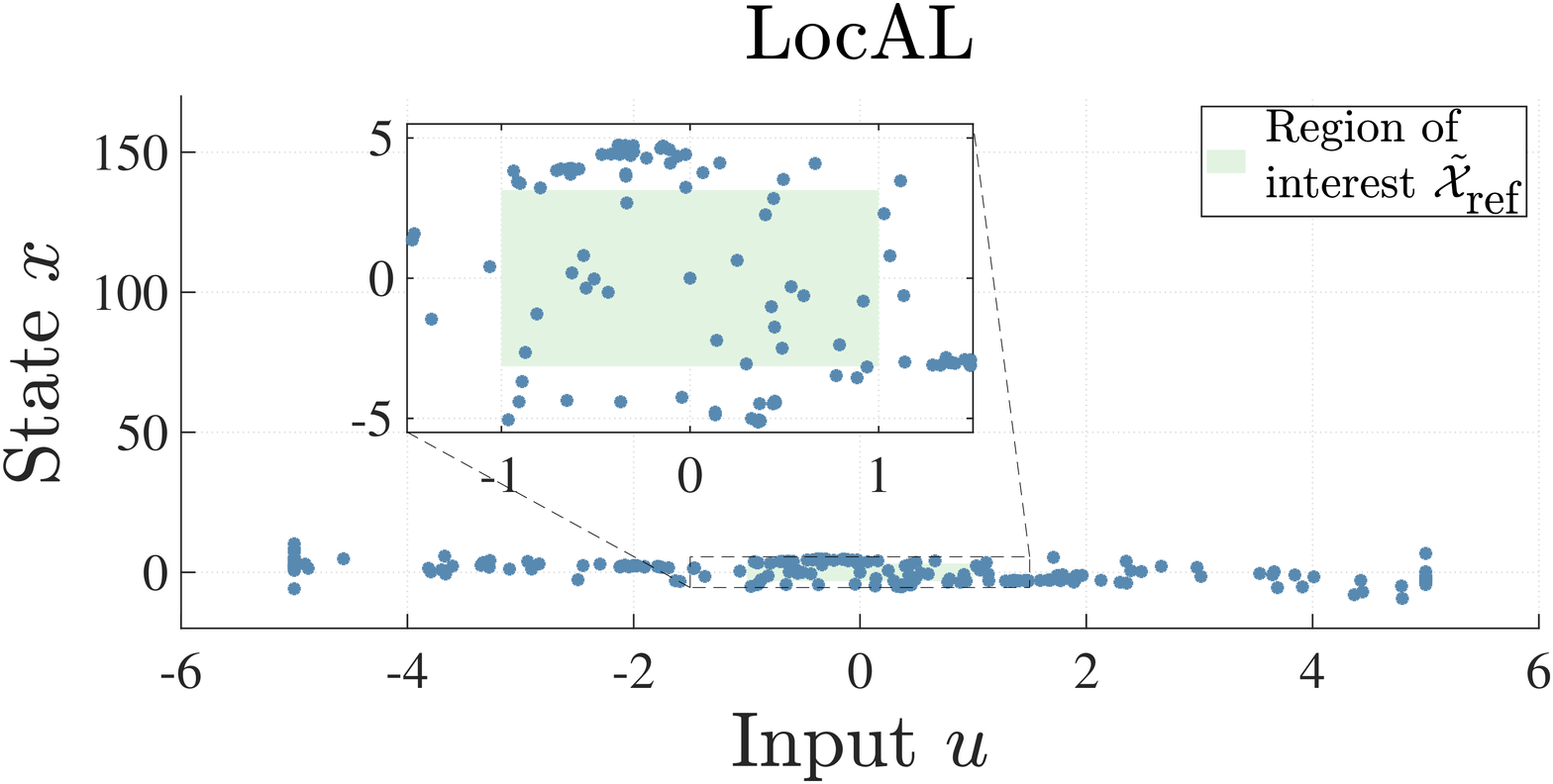}}
	\hfill
%	\fbox{\includegraphics[scale=0.168]{toy_probEntropy}}
	\fbox{\includegraphics[scale=0.168]{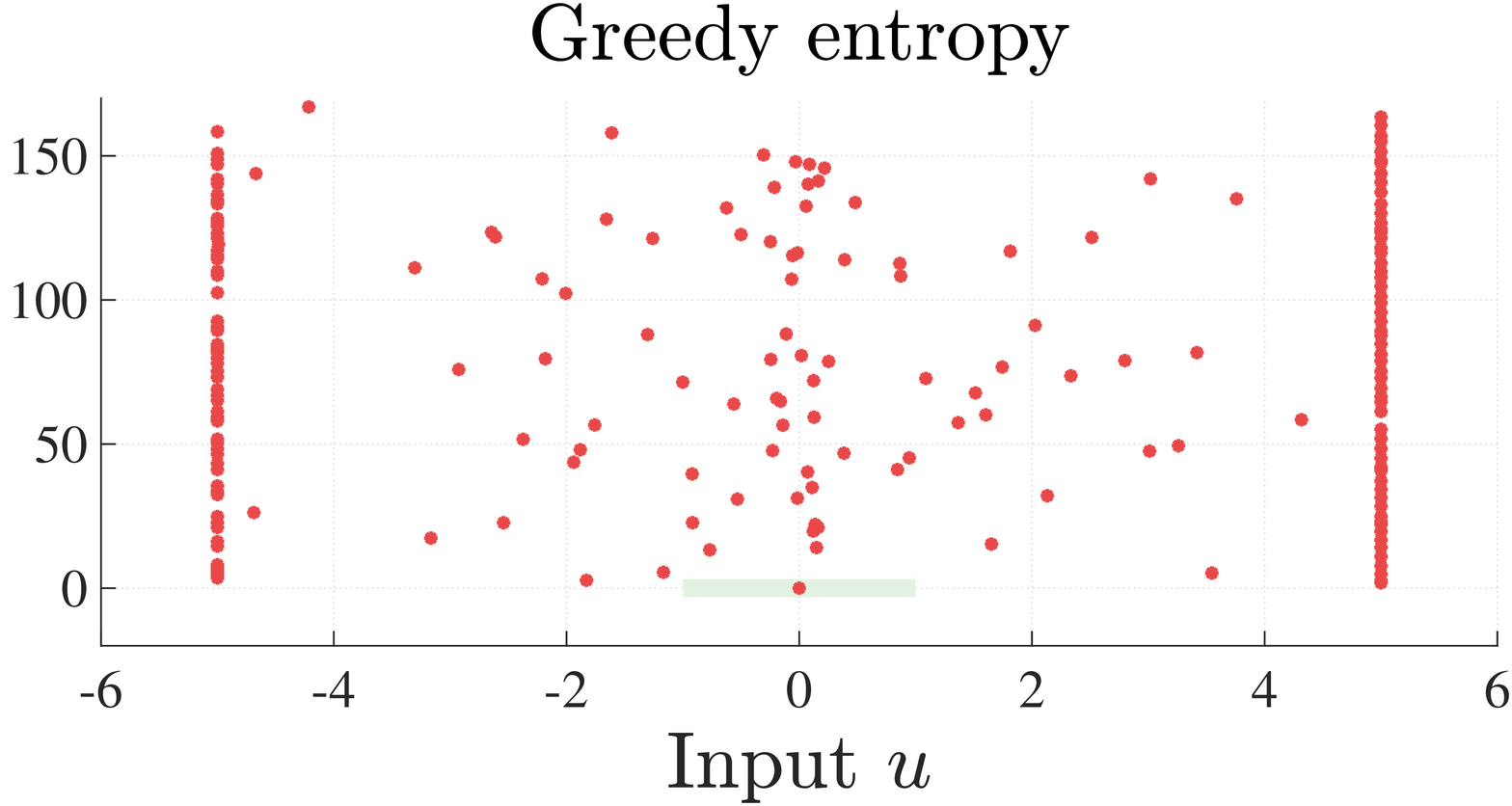}}
		\fbox{\includegraphics[scale=0.168]{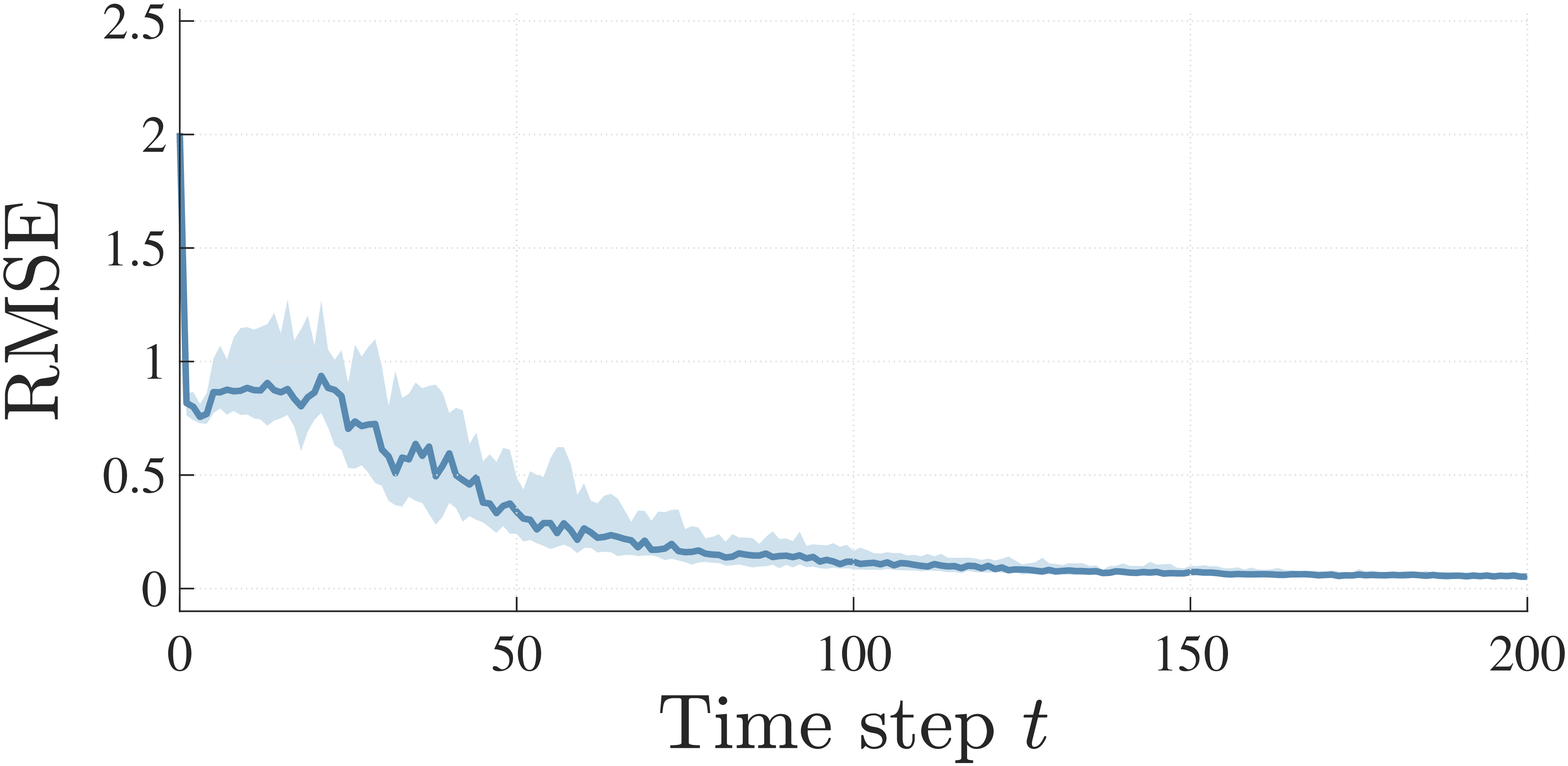}}
		\hfill
%	\fbox{\includegraphics[scale=0.168]{RMSEtoy_probEntropy}}
	\fbox{\includegraphics[scale=0.168]{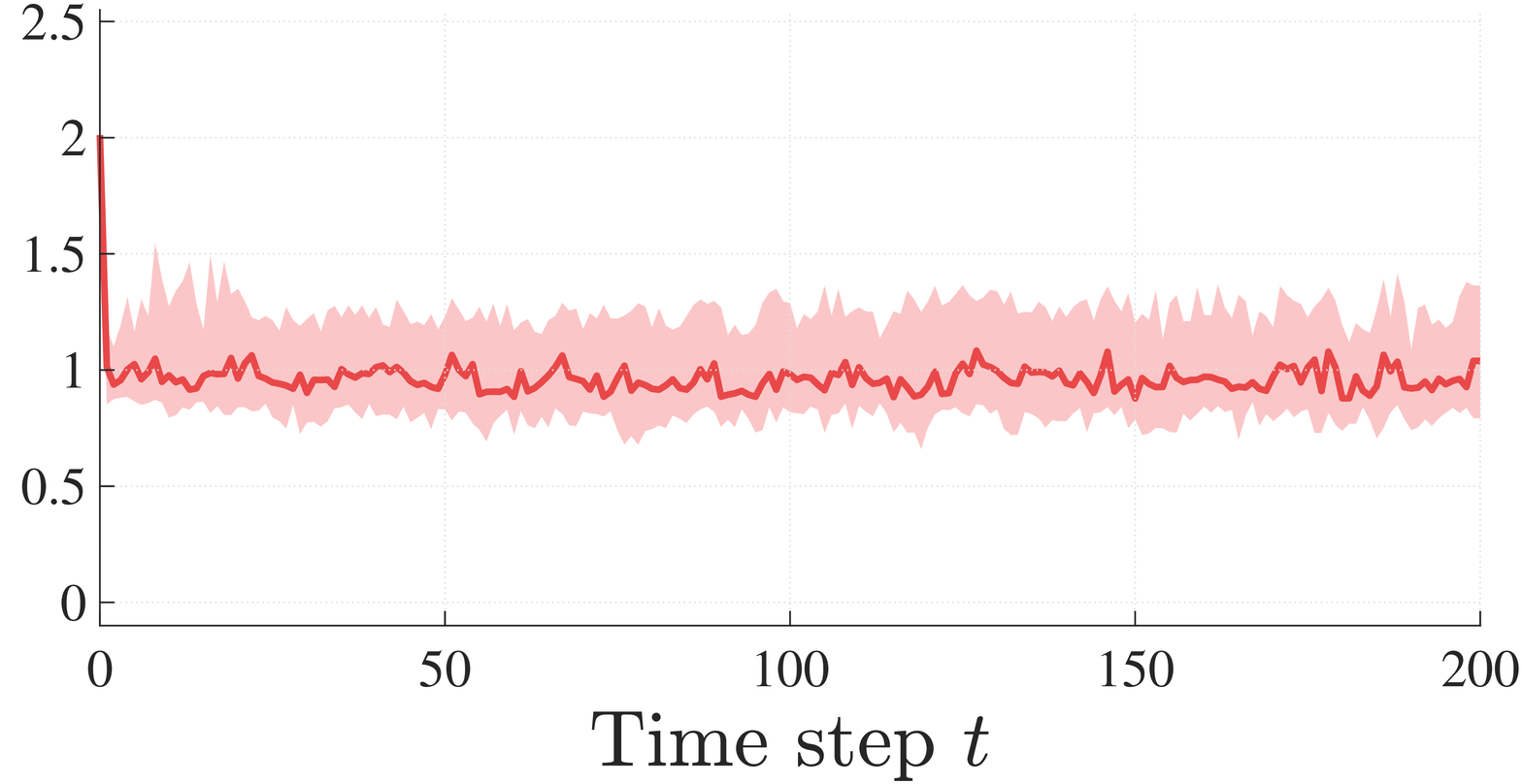}}
	%\fbox{\includegraphics[scale=0.21]{CumulativeInformationGain}}\vspace{-0.4cm}
	\caption{Toy problem results. Collected data in augmented state space $\X \times \mathcal{U}$ after 200 times steps (top). Median, lower and upper quartile of RMSE on region of interest (bottom). The LocAL algorithm explores the region of interest more thoroughly than the entropy-based approach}
	\label{fig:ToyProbResults}
\end{figure*} 
\subsection{Surface exploration}
We apply the LocAL algorithm to the dynamical system given by
	\begin{align}
	\label{eq:MountainBallDynamics}
	%\dot{x}_1 &= x_2 \\
	\dot{x}_1 = 3 u_1 + 10 \cos(5 x_1)\cos(5 x_2), \qquad \qquad
	%\dot{x}_2 &= x_4 \\
	\dot{x}_2 =    3 u_2 + 10 \sin(5 x_1)\sin(5 x_2).
	\end{align}
%\begin{align}
%	\frac{\text{d} }{\text{dt}} \left( \begin{matrix}
%	\theta \\ \dot{\theta}
%\end{matrix}	 \right) = \left( \begin{matrix} \dot{\theta} \\ \frac{g}{l} \sin(\theta) - \frac{\eta}{m l^2} \dot{\theta}  + u\end{matrix}  \right),
%\end{align}
This setting can be seen as a surface exploration problem, i.e., an agent navigates a surface to learn its curvature. We aim to obtain an accurate model of the dynamics within the region given by $
\StatSpAugRef = \{[\bm{x}\transp \ \bm{u}\transp]\transp \in \StatSpAug \ \vert \ \x \in [-\pi/4,\pi/4]^2, \ \uin \in [-1,1]^2 \}$. To run the LocAL algorithm, we employ a discretization step of $\Delta \step =\SI{0.02}{\s}$ and set the prior model to~${\f(\xaug)= \bm{x} + \Delta \step \uin
}$. The results are shown in \Cref{fig:MountainCarResults}.

The LocAL algorithm manages to significantly improve its model after $400$ time steps, while the entropy-based strategy does not yield any improvement. This is because every variable of the state space is unbounded, i.e., the state space can be explored for a potentially infinite amount of time without ever reaching the region of interest $\StatSpAugRef$. 

\begin{figure*}[t]
	\setlength{\fboxsep}{1pt}%
	\setlength{\fboxrule}{0pt}%
	\fbox{\includegraphics[scale=0.168]{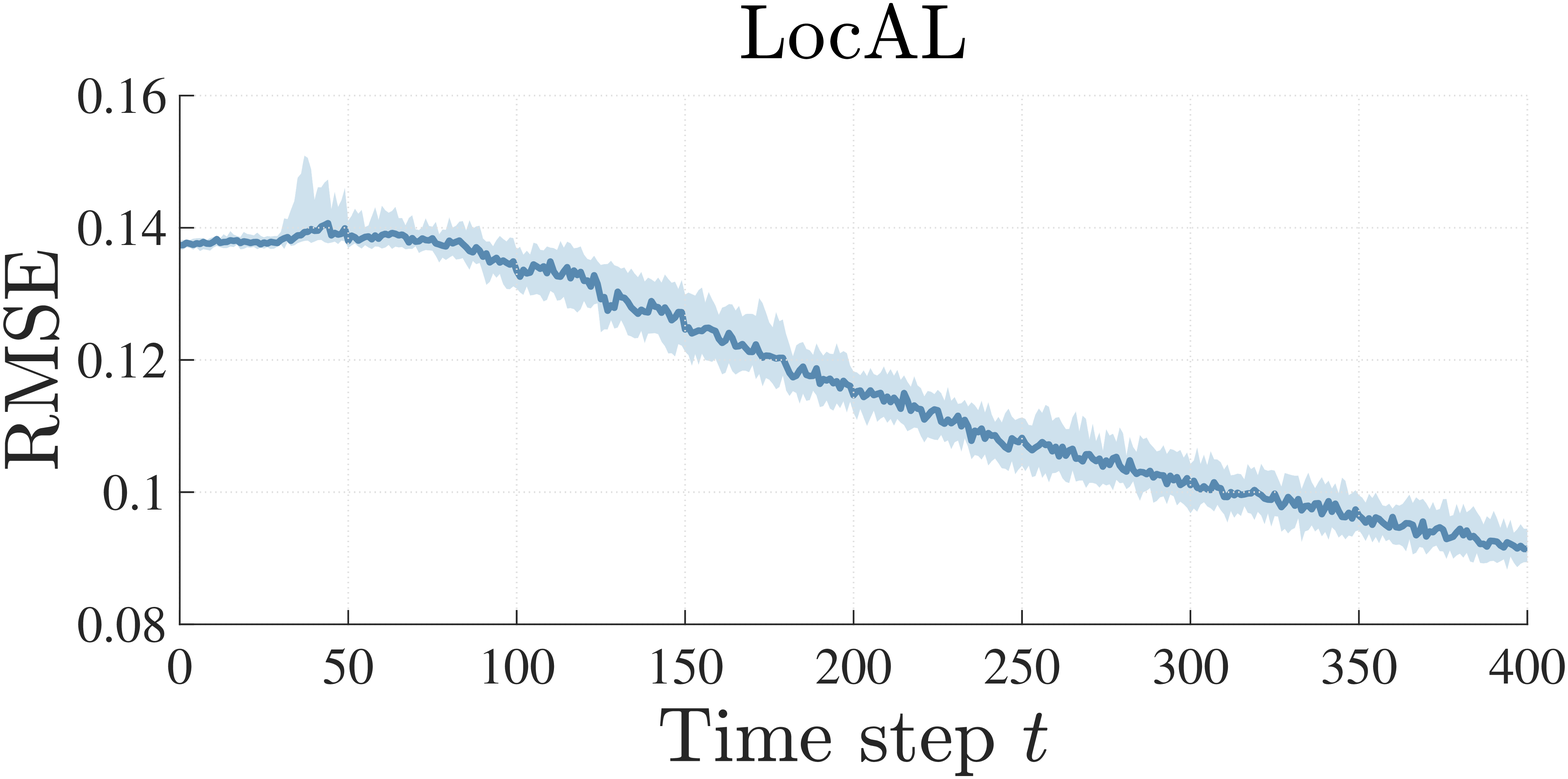}}
	\hfil
	%	\fbox{\includegraphics[scale=0.168]{RMSEtoy_probEntropy}}
	\fbox{\includegraphics[scale=0.168]{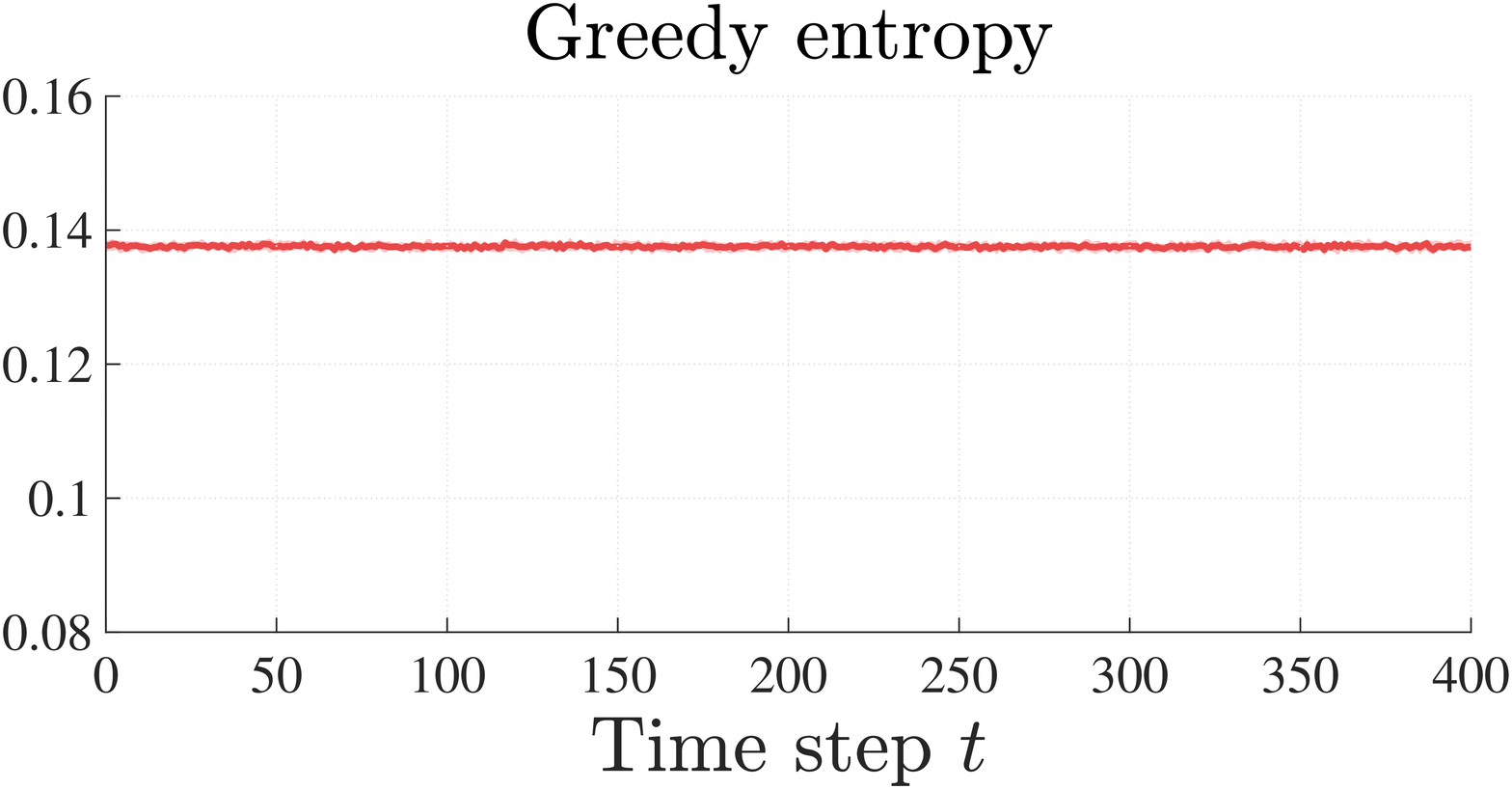}}
	%\fbox{\includegraphics[scale=0.21]{CumulativeInformationGain}}\vspace{-0.4cm}
	\caption{Surface exploration results. Median, lower and upper quartile of RMSE on region of interest.}
	\label{fig:MountainCarResults}
\end{figure*} 

\subsection{Pendulum}
\label{subsect:Pendulum}

We now consider a two-dimensional pendulum, whose state $\bm{x}=[\vartheta, \dot{\vartheta} ]$ is given by the angle $\vartheta$ and angular velocity $\dot{\vartheta}$. The input torque $u$ is constrained to the interval $\mathcal{U} = [-10,10]$. Our goal is to obtain a suitable model within $
\StatSpAugRef = \{ [\bm{x}\transp  u]\transp \in \StatSpAug \ \vert \  x_1 \in [\pi/2, 3/2\pi], \ x_2  [-5,5], \ u \in [-3,3] \}$.
Obtaining a precise model around this region is particularly useful for the commonly considered task of stabilizing the pendulum around the upward position $\vartheta = \dot{\vartheta}=0$. The results are depicted in \Cref{fig:PendulumResults}.
\begin{figure*}[!t]
	\setlength{\fboxsep}{1pt}%
	\setlength{\fboxrule}{0pt}%
	\fbox{
			\includegraphics[scale=0.168]{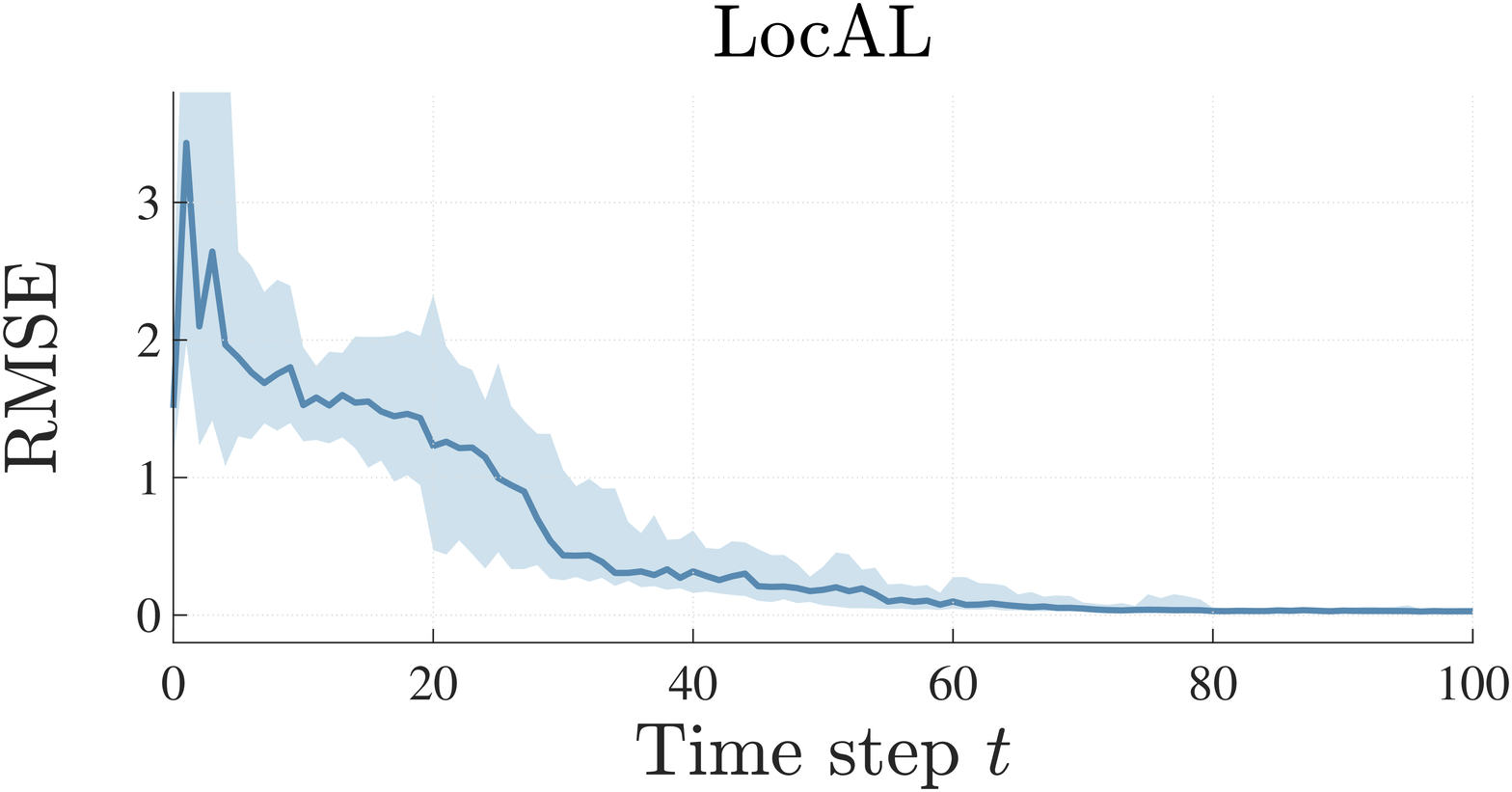}}
		\hfill
		\fbox{
			\includegraphics[scale=0.168]{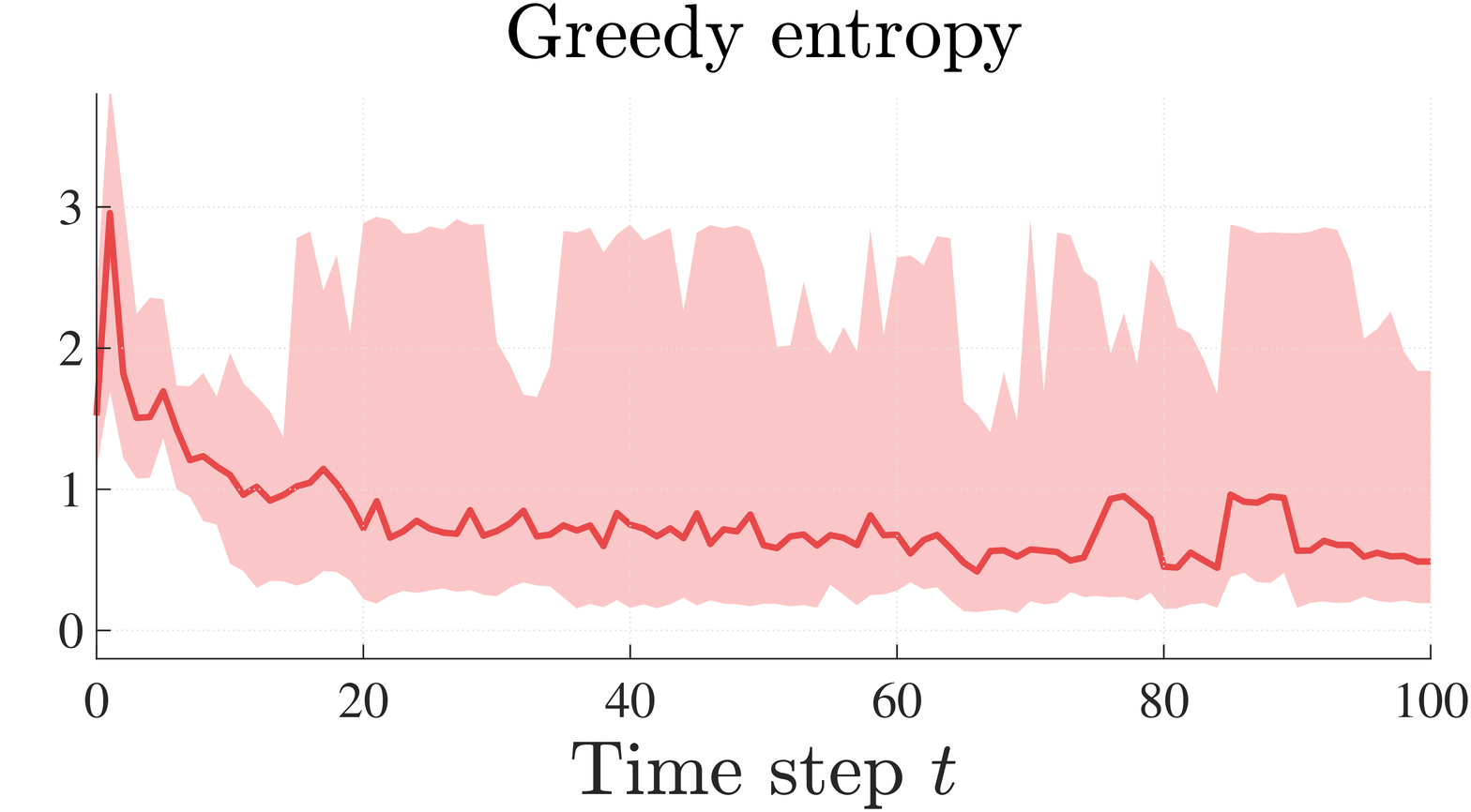}}
	%\fbox{\includegraphics[scale=0.21]{CumulativeInformationGain}}\vspace{-0.4cm}
	\caption{Pendulum results. Median, lower and upper quartile of RMSE on region of interest.}
	\label{fig:PendulumResults}
\end{figure*} 

The RMSE indicates that the LocAL algorithm yields a similar model improvement every run. The model obtained with the entropy-based strategy, by contrast, exhibits higher errors. %This indicates that the model is less accurate than the one obtained by LocAL.

\subsection{Cart-pole}
\label{subsect:Cart_pole}

We apply the LocAL algorithm to the cart-pole system \citep{6313077}. The state space is given by $\bm{x}=[v, \vartheta, \dot{\vartheta} ]$, where $v$ is the cart velocity, $\vartheta$ is the pendulum angle, and $\dot{\vartheta}$ is its angular velocity. Here we ignore the cart position, as it has no influence on the system dynamics.
The region of interest is given by $
\StatSpAugRef = \left\{[\bm{x}\transp \bm{u}\transp]\transp \in \StatSpAug \  \Big\vert \ x_1 \in [-2,2], \  x_2 \in [-\pi/4,\pi/4], \ u_1,u_2 \in [-5,5] \right\}.$
 Similarly to the pendulum case, obtaining an accurate model on this region is useful to address the balancing task. The discretization step is set to $\Delta t =  \SI{0.05}{\s}$, the prior model is $\bm{f}(\xaug) = \bm{x}$. The results are shown in \Cref{fig:CartPoleResults}.

 \begin{figure*}[!t]
 	\setlength{\fboxsep}{1pt}%
 	\setlength{\fboxrule}{0pt}%
 	\fbox{
 		\includegraphics[scale=0.168]{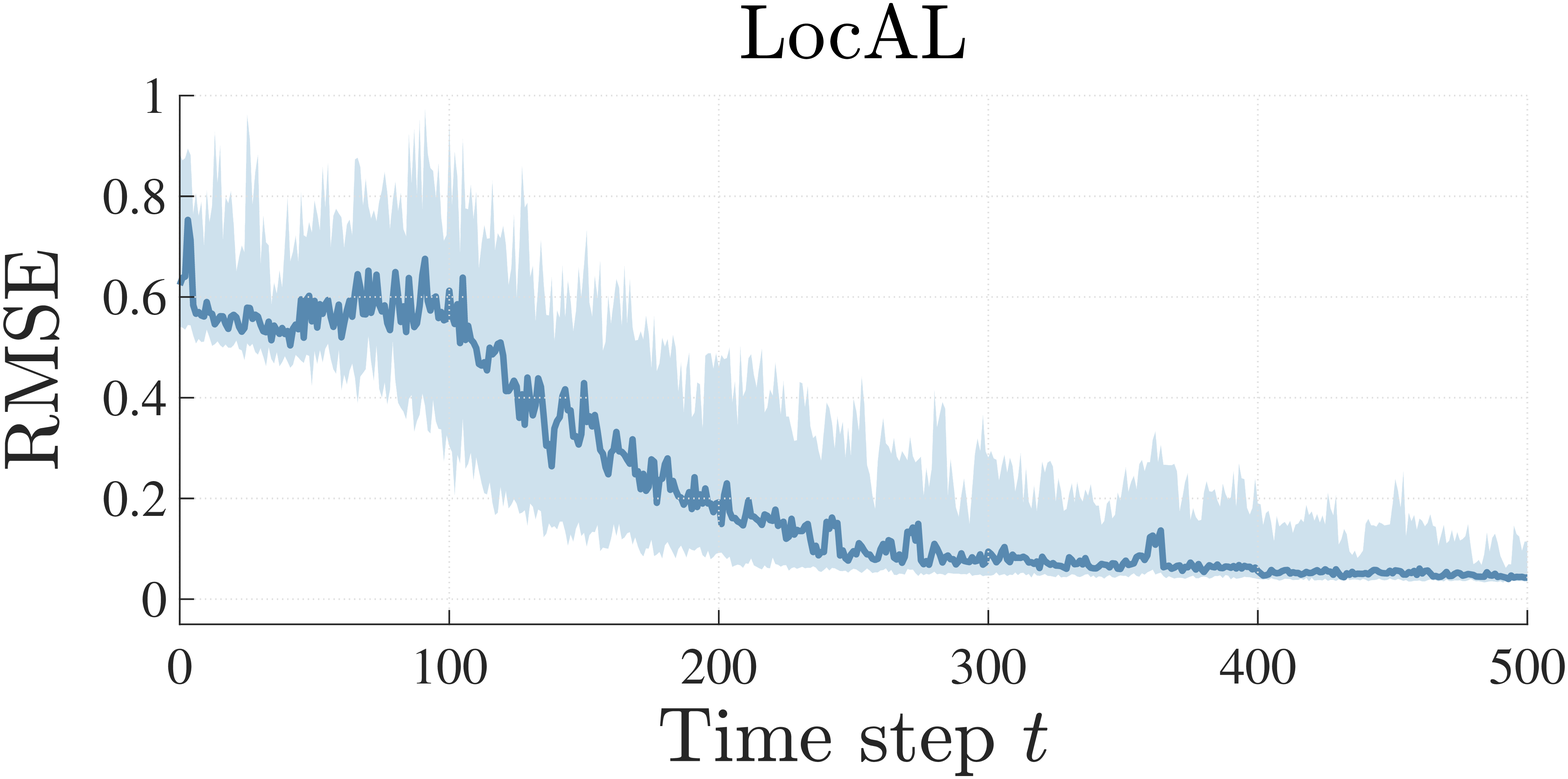}}
 	\hfill
 	%	\fbox{\includegraphics[scale=0.168]{RMSEtoy_probEntropy}}
 	\fbox{
 		\includegraphics[scale=0.168]{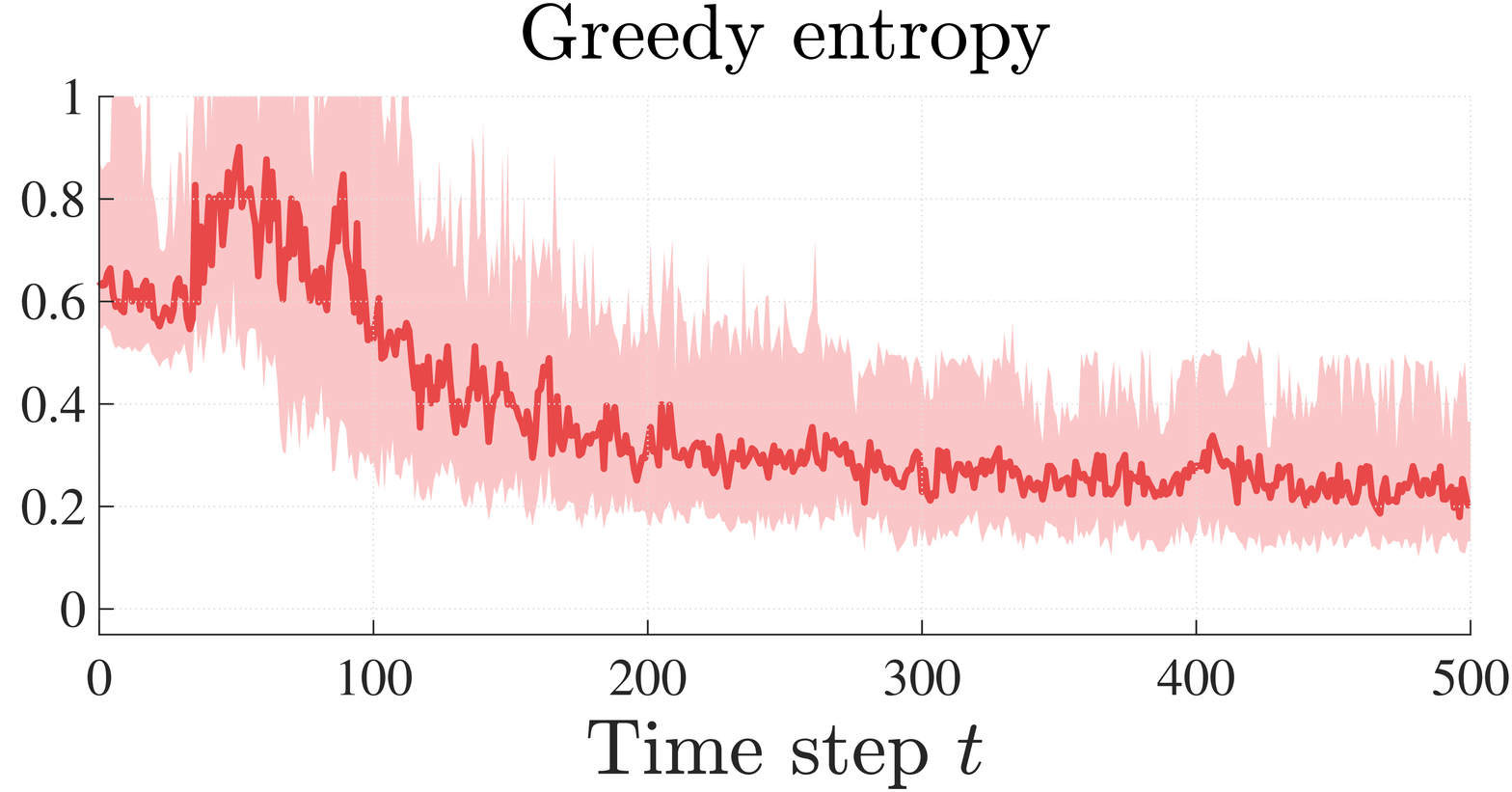}}
 	%\fbox{\includegraphics[scale=0.21]{CumulativeInformationGain}}\vspace{-0.4cm}
 	\caption{Cart-pole results. Median, lower and upper quartile of RMSE on region of interest.}
 	\label{fig:CartPoleResults}
 \end{figure*}

Similarly to the pendulum case, the model obtained with the LocAL algorithm exhibits low standard deviation compared to the one obtained with the entropy-based approach. This is because the region of interest is explored more thoroughly with our approach.

\section{Conclusion}
\label{sect:Conclusion}
A technique for efficiently exploring bounded subsets of the state-action space of a system has been presented. The proposed technique aims to minimize the mutual information of the system trajectories with respect to a discretization of the region of interest. It employs Gaussian processes both to model the unknown system dynamics and to quantify the informativeness of potentially collected data points. In numerical simulations of four different dynamical systems, we have demonstrated that the proposed approach yields a better model after a limited amount of time steps than a greedy entropy-based approach.

\section*{Acknowledgements}
This work was supported by the European Research Council (ERC)
Consolidator Grant ”Safe data-driven control for human-centric systems 
(CO-MAN)” under grant agreement number 864686 .

\bibliography{../Literature/AllPhDReferences}

\end{document}